\theoremstyle{plain}
\newtheorem{theorem}{Theorem}[section]
\newtheorem{proposition}[theorem]{Proposition}
\newtheorem{lemma}[theorem]{Lemma}
\newtheorem{corollary}[theorem]{Corollary}
\newtheorem{example}[theorem]{Example}
\theoremstyle{definition}
\newtheorem{definition}[theorem]{Definition}
\newtheorem{assumption}[theorem]{Assumption}
\theoremstyle{remark}
\newtheorem{remark}[theorem]{Remark}
\newcommand{\sub}{\subset}                  
\newcommand{\R}{\mathbb{R}}                 
\newcommand{\N}{\mathbb{N}}                 
\newcommand{\Expval}{\mathbb{E}}            
\newcommand{\Prob}{\mathbb{P}}              
\newcommand{\Qposteriori}{\mathbb{Q}}       
\newcommand{\indicatorfunc}[1]{\boldsymbol{1}_{\{#1\}}}       
\newcommand{\Pprior}{\Prob_H}               
\newcommand{\calL}{\mathcal{L}}             
\newcommand{\calU}{\mathcal{U}}             
\newcommand{\inspace}{\mathcal{X}}          
\newcommand{\inspaceRepr}{\inspace_\varphi} 
\newcommand{\outspace}{\mathcal{Y}}         
\newcommand{\mfunc}{\mathcal{M}(\inspace, \outspace)}  
\newcommand{\hypoclass}{\mathcal{H}}        
\newcommand{\noisespace}{\mathcal{E}}       
\newcommand{\spaceS}{\mathcal{S}}           
\newcommand{\spaceT}{\mathcal{T}}           
\newcommand{\spaceZ}{\mathcal{Z}}           
\newcommand{\group}{\mathcal{G}}            
\newcommand{\borel}[1]{\mathscr{B}(#1)}     
\newcommand{\sigIn}{\mathscr{X}}            
\newcommand{\sigOut}{\mathscr{Y}}           
\newcommand{\sigHypo}{\mathscr{H}}          
\newcommand{\signoise}{\mathscr{E}}         
\newcommand{\sigEvaluation}{\mathscr{M}}    
\newcommand{\sigF}{\mathscr{F}}             
\newcommand{\sigS}{\mathscr{S}}             
\newcommand{\sigT}{\mathscr{T}}             
\newcommand{\loss}{\ell}                    
\newcommand{\risk}{\mathcal{R}_\loss}             
\newcommand{\emprisk}{\hat{\mathcal{R}}_\loss}    
\newcommand{\RVX}{X}                        
\newcommand{\RVY}{Y}                        
\newcommand{\RVnoise}{\Xi}                  
\newcommand{\RVXrepr}{X_\varphi}            
\newcommand{\RVYrepr}{Y_\varphi}            
\newcommand{\RVgroup}{G}                    
\newcommand{\noise}{\xi}                    
\newcommand{\samplesize}{n}                 
\newcommand{\sample}{{S_\samplesize}}       
\newcommand{\idmat}{I}                      
\newcommand{\averageOpEquiv}{\mathcal{Q}}   
\newcommand{\KL}[2]{D_{\operatorname{KL}}(#1 \| #2)}  
\newcommand{\projection}{\pi}               
\newcommand{\map}[3]{#1\colon #2 \to #3}    
\newcommand{\specialeuclgr}[1]{\operatorname{SE}(#1)} 
\newcommand{\specialorthgr}[1]{\operatorname{SO}(#1)} 
\newcommand{\Lorentzgr}{\operatorname{SO}^+(1,3)} 
\icmltitlerunning{Symmetries in PAC-Bayesian Learning}
\begin{document}

\twocolumn[
  \icmltitle{Symmetries in PAC-Bayesian Learning}



  \icmlsetsymbol{equal}{*}

  \begin{icmlauthorlist}
    \icmlauthor{Armin Beck}{MPI,UdS}
    \icmlauthor{Peter Ochs}{UdS}
  \end{icmlauthorlist}

  \icmlaffiliation{MPI}{Max Planck Institute for Informatics, Saarbrücken, Germany}
  \icmlaffiliation{UdS}{Saarland University, Saarbrücken, Germany}

  \icmlcorrespondingauthor{Armin Beck}{armin.beck@math.uni-sb.de}

  \icmlkeywords{Symmetries in Machine Learning, PAC-Bayes Generalization Bounds, Theoretical Machine Learning, Symmetry-aware Models}

  \vskip 0.3in
]



\printAffiliationsAndNotice{}  

\begin{abstract}
  Symmetries are known to improve the empirical performance of machine learning models, yet theoretical guarantees explaining these gains remain limited. Prior work has focused mainly on compact group symmetries and often assumes that the data distribution itself is invariant, an assumption rarely satisfied in real-world applications. In this work, we extend generalization guarantees to the broader setting of non-compact symmetries, such as translations and to non-invariant data distributions. Building on the PAC-Bayes framework, we adapt and tighten existing bounds, demonstrating the approach on McAllester's PAC-Bayes bound while showing that it applies to a wide range of PAC-Bayes bounds. We validate our theory with experiments on several datasets with non-uniform and non-compact transformations, where the derived guarantees not only hold but also improve upon prior results. These findings provide theoretical evidence that, for symmetric data, symmetric models are preferable beyond the narrow setting of compact groups and invariant distributions, opening the way to a more general understanding of symmetries in machine learning.
\end{abstract}

\section{Introduction}
Many real-world machine learning tasks exhibit inherent symmetric structures, meaning that certain transformations of the input do not change the output. For instance, in image classification, semantic labels remain unchanged under transformations such as rotations, reflections, or translations. Models that explicitly capture such symmetries usually outperform those that do not, as demonstrated empirically in prior work \cite{cohen2016gcns}. Early theoretical frameworks that have analyzed the advantages of incorporating symmetry into models, including \cite{lyle2020invariance} and \cite{elesedy2021generalisation}, typically relying on two key assumptions: (i) the symmetry group is compact and (ii) the data distributions that are invariant under the group actions.

However, both assumptions are restrictive in many practical scenarios. Real-world data often exhibit more complex or non-compact symmetries. For example, (i) the non-compact group of translations is an important class of symmetries that plays a central role in convolutional architectures of neural networks \cite{biscione2021translation}. Furthermore, (ii) real-world data distributions are usually not invariant under relevant transformations, for example, an upside-down filled cup of water is unlikely to occur in natural data. 

In this work, we address this gap by studying learning under general non-compact and non-uniform symmetries. Our analysis is based on a PAC-Bayesian framework, which is particularly well suited to this setting because it allows data-dependent and non-uniform priors over hypothesis classes. This flexibility makes it possible to handle symmetries for which no uniform measure exists and to account explicitly for deviations from distributional invariance. 

We derive PAC-Bayesian generalization bounds that explain the benefits of exploiting symmetries, if they are synchronized between the hypothesis class and the data distribution. Thereby, our framework delivers the theoretical foundations and guarantees for the, in practice, oftentimes reported and improved performance of models that exploit symmetry of the data. Unlike previous work, we include non-compact symmetry groups and non-invariant data distributions, which, to the best of our knowledge, is the first  PAC-Bayesian generalization analysis that accommodates non-compact symmetries and thereby covers translation-invariant architectures within a unified theoretical framework. Finally, our theoretical findings are supported by a numerical experiment that evaluates the PAC-Bayesian bound explicitly.

\subsection{Related Work}

The PAC-Bayes framework provides a powerful tool for deriving generalization guarantees. Early work by McAllester \yrcite{mcallester1999pacbayesian} laid the foundations for PAC-Bayes bounds, which were later refined to produce sharper and more robust guarantees for supervised classification tasks by Catoni \yrcite{catoni2007pacbayes}. Several well-known results provide concrete bounds e.g. \cite{seeger2003pacbayes, langford2002pacbayes, germain2009pacbayes, mcallester2003simplified}, and more recent work has explored non-vacuous or compression-based bounds for deep neural networks \cite{arora2018strongercompression, zhou2019nonvacuous, dziugaite2017computing}. For an accessible overview of and introduction to PAC-Bayes theory, we refer the reader to \cite{alquier2024pacbayes}.

Beyond purely theoretical guarantees, symmetries in machine learning models have been shown to confer practical benefits. Group-equivariant convolutional networks (G-CNNs) \cite{cohen2016gcns, cohen2019equivariant} generalize standard convolutional layers to exploit symmetries in the data, leading to improved sample efficiency and robustness even when the data distribution is not strictly invariant. Steerable CNNs \cite{cohen2017steerable, weiler20183dsteerable} extend this approach by allowing more flexible equivariant mappings, and practical implementations \cite{weiler2019e2cnn, bloemreddy2020probabilitiesymmetries, pfau2020abinitio} demonstrate the relevance of symmetries not only in computer vision but also in domains like physics, where symmetries arise naturally.

Several recent works provide theoretical analyses of symmetries in machine learning. \cite{lyle2019analysis, lyle2020invariance} established PAC-Bayesian generalization results for models with finite or compact group symmetries given an invariant data assumption. \cite{elesedy2021generalisation, elesedy2022group} studied the benefits of symmetry given by a compact group from both the classical PAC and generalization perspectives for invariant data distributions, providing strict generalization benefits. \cite{kondor2018compactgroups} further formalized the necessity of convolutional structure for equivariance under compact groups. More recently, \cite{behboodi2022pacbayesian} derived a PAC-Bayesian generalization bound for a specific class of equivariant multilayer perceptrons under compact group symmetries. In contrast to the standard PAC-Bayesian setting, their analysis focuses on a deterministic network instance rather than posterior distributions over hypotheses. In a different direction, \cite{lotfi2022pacbayescompression} proposed compression-based PAC-Bayesian bounds that explain generalization through carefully constructed priors and posteriors adapted to compressed neural networks. Together, these results provide a strong theoretical foundation for understanding the advantage of symmetric models under idealized conditions.

In contrast to prior analyses, our work extends these guarantees to non-compact symmetries such as translations, and to non-invariant data distributions, conditions that are more representative of practical scenarios. Further, our framework applies to general hypothesis classes and follows the standard PAC-Bayesian formulation by bounding the true and empirical risks with respect to posterior distributions. Moreover, our analysis shows that when models respect the symmetries present in the data, these symmetries can lead not only to tighter bounds but also to provably reduced risks. Within the PAC-Bayes framework, we adapt and tighten existing bounds, providing theoretical evidence that symmetry can improve generalization beyond the commonly assumed compact and invariant settings.

\subsection{Preliminaries and Notations}

Let $(\Omega, \sigF, \Prob)$ denote the underlying probability space, which is assumed to be rich enough for the following analysis. Let $X$ and $Y$ be real-valued random variables. We write $X \stackrel{d}{=} Y$ to indicate that $X$ and $Y$ are equal in distribution, meaning that their probability laws coincide $\Prob(X \leq t) = \Prob(Y \leq t)$ for all $t \in \R$. We consider two standard Borel spaces $(\inspace, \sigIn)$ as the input space and $(\outspace, \sigOut)$ as the output space, where $\outspace \sub \R$. Throughout, $(\group, \cdot)$ will be an arbitrary topological group, whose Borel $\sigma$-algebra makes it a standard Borel space.
The group $\group$ acts measurably on the input space $\inspace$ via the map $\map{\varphi}{\group \times \inspace}{\inspace}$ and on the output space $\outspace$ via $\map{\psi}{\group \times \outspace}{\outspace}$. For convenience, we adopt the shorthand notation $g \cdot x := \varphi(g,x)$ and $g \cdot y := \psi(g,y)$. Although the same symbol $\cdot$ is used for both actions, it is important to note that they may represent distinct group actions on $\inspace$ and $\outspace$, the intended meaning will be clear from context.
In the following we denote the set of measurable functions by $\mfunc :=\{\map{f}{\inspace}{\outspace} \; : \; f$ is measurable$\}$, equipped with the evaluation $\sigma$-algebra $\sigEvaluation$. Let the hypothesis class $\hypoclass \subset \mfunc$ be a subset of the measurable functions, and assume that it is a Polish space. Each element $f \in \hypoclass$ is called a hypothesis function.
A hypothesis function $f \in \hypoclass$ is said to be equivariant, if $f(g \cdot x) = g \cdot f(x)$ holds for all $g \in \group$ and $x \in \inspace$. An equivariant hypothesis function $f \in \hypoclass$ is called invariant if the group acts trivially on the output space $\outspace$, meaning $g \cdot y = y$ for all $g \in \group$ and $y \in \outspace$. In this case, the equivariance condition reduces to $f(x) = f(g \cdot x)$ for all $g \in \group$ and $x \in \inspace$ so that invariance can be seen as a special case of equivariance where the output remains unchanged under the group action. In the following we will focus on the equivariant case, which also captures the invariant case. We denote by $\inspace_\varphi$ a set of representatives of the quotient $\inspace/\group$, that is a set containing exactly one representative from each orbit.\\
Let $X$ be a random element taking values in the input space $\inspace$ and $Y$ a random element in the output space on $\outspace$. We define a loss function as a measurable function $\map{\loss}{\outspace \times \outspace}{[0, \infty)}$. Given a loss function, the (expected) risk $\risk$ is defined as
\begin{align}
    \label{Definition risk}
    \map{\risk}{\hypoclass}{[0, \infty)}, \quad \risk(f) &:= \Expval [\loss(f(\RVX), \RVY)]\,.
\end{align}

The empirical risk $\emprisk$ of a measurable function $f$ and a set of points $\{(x_i, y_i)\}_{i=1}^n \in (\inspace \times \outspace)^n$ is given by
\begin{align}
    \label{empirical risk}
    &\map{\emprisk}{\hypoclass \times \Dot{\bigcup_{n\in\N}}(\inspace \times \outspace)^\samplesize }{[0, \infty)} \nonumber\\
    &\emprisk \left(f, \{x_i, y_i\}_{i=1}^\samplesize\right) = \frac{1}{n}\sum_{i=1}^\samplesize \loss(f(x_i), y_i)\,.
\end{align}
To simplify notation for integration, we adopt the following shorthand for integrals, given a probability distribution $\Qposteriori$ on a measurable space $(\noisespace, \signoise)$ and an integrable function $\map{f}{\noisespace}{\R}$, we write
\begin{align} \label{eq:operator-notation}
    \Qposteriori[f] := \int_\noisespace f(x) \, \Qposteriori(dx)\,.
\end{align}
Furthermore, let $\mu$ and $\nu$ be a probability measures on a measurable space $(\spaceT, \sigT)$. We denote absolute continuity of $\nu$ with respect to $\mu$ by $\nu \ll \mu$. Let $\map{\pi}{\spaceT}{\spaceS}$ be a measurable function, where $(\spaceS, \sigS)$ is another measurable space, then $\pi_*\mu$ denotes the pushforward measure of $\mu$ under $\pi$, defined by $\pi_*\mu(B) = \mu(\pi^{-1}(B))$ for all measurable $B \sub \spaceS$. A measurable space $(\spaceT, \sigT)$ is called a standard Borel space if it is Borel isomorphic to a Polish space equipped with its Borel $\sigma$-algebra. That is, there exists a Polish space $(\spaceZ, \borel{\spaceZ})$ and a bijection $\map{\phi}{\spaceT}{\spaceZ}$ such that both $\phi$ and $\phi^{-1}$ are measurable. Lastly, we define the Kullback–Leibler (KL) divergence between two probability distributions $\mu$ and $\nu$ on a measurable space $(\spaceS, \sigS)$ as
\begin{align*}
    \KL{\mu}{\nu} :=
    \begin{cases}
        \int_\spaceS \operatorname{log} \left (\frac{d\mu}{d\nu} (s) \right ) \, \mu(ds), \quad & \text{if} \; \mu \ll \nu \;;\\
        \infty, &\text{otherwise}\,,
    \end{cases}
\end{align*}
where $\frac{d\mu}{d\nu}$ denotes the Radon–Nikodym derivative.

Having introduced the necessary notation, we now present the Disintegration Theorem and a well-known standard PAC-Bayesian bound due to McAllester. The Disintegration Theorem provides a way to decompose a joint probability measure into its marginal and a probability kernel. This result is fundamental for defining conditional distributions on general measurable spaces and underlies many constructions in probability theory and statistical learning.

\begin{theorem}\citep[][Theorem 3.4]{kallenberg2002foundations}
\label{disintegration theorem}
    Let $(\spaceS, \sigS)$ and $(\spaceT, \sigT)$ be measurable spaces, where $\spaceT$ is Borel. Let $\mu$ be a probability measure on $\spaceS \times \spaceT$. Then $\mu = \mu_\spaceS \otimes \kappa$, where $\mu_\spaceS \equiv \mu(\cdot \times \spaceT)$ is the marginal and $\kappa: \spaceS \to \spaceT$ is a probability kernel. Further, $\kappa$ is unique $\mu_\spaceS$ a.e.
\end{theorem}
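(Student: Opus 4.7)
The plan is to reduce the statement to the case $\spaceT = \R$ and then construct the kernel $\kappa$ via a regular conditional cumulative distribution function built from countably many versions of conditional expectations. Since $\spaceT$ is standard Borel, there exists a measurable isomorphism $\map{\phi}{\spaceT}{B}$ onto a Borel subset $B \sub \R$. It therefore suffices to produce the disintegration for the pushforward $(\mathrm{id}_\spaceS \otimes \phi)_* \mu$ on $\spaceS \times \R$ and then transport the resulting kernel back through $\phi^{-1}$; this first step is purely formal and uses nothing beyond measurability of the isomorphism.

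The core construction then works on $\spaceS \times \R$. For each $q \in \mathbb{Q}$, I would pick a measurable version $\map{F_q}{\spaceS}{[0, 1]}$ of the conditional expectation $\Expval_\mu[\indicatorfunc{t \le q} \mid s]$, where $s$ and $t$ denote the canonical coordinate projections. The key technical step is to show that, outside a single $\mu_\spaceS$-null set $N$, the countable family $\{F_q\}_{q \in \mathbb{Q}}$ is simultaneously monotone in $q$ and satisfies $F_q(s) \to 0$ as $q \to -\infty$ and $F_q(s) \to 1$ as $q \to +\infty$; since each of these countably many conditions fails only on a null set, the union $N$ of the exceptional sets remains null. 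One then defines $F(s, t) := \inf\{F_q(s) : q \in \mathbb{Q},\, q > t\}$ for $s \notin N$, and arbitrarily (as a fixed CDF) for $s \in N$, so that $F(s, \cdot)$ is a bona fide right-continuous CDF on $\R$ jointly measurable in $(s, t)$. The Carath\'{e}odory extension then produces the desired probability kernel $\kappa(s, \cdot)$.

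To conclude, a monotone class argument shows $\mu(A \times C) = \int_A \kappa(s, C) \, \mu_\spaceS(ds)$ first for $C = (-\infty, q]$ with $q \in \mathbb{Q}$, by the defining property of the $F_q$, and then for all $C \in \borel{\R}$ by extension, followed by the usual extension from product rectangles $A \times C$ to all of $\borel{\spaceS} \otimes \borel{\R}$. Uniqueness $\mu_\spaceS$-almost everywhere follows because any two candidate kernels must agree $\mu_\spaceS$-a.e. on the countable $\pi$-system of half-lines with rational endpoints, and therefore on the full Borel $\sigma$-algebra by a standard $\pi$--$\lambda$ argument. The main obstacle throughout is precisely the simultaneous selection of good versions of an uncountable family of conditional expectations; this is where the countable density of $\mathbb{Q}$ combined with the standard Borel hypothesis on $\spaceT$ becomes indispensable, and is the reason the theorem can genuinely fail on pathological, non-Borel measurable spaces.
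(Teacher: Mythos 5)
The paper does not actually prove this statement---it is imported verbatim from Kallenberg---and your argument is precisely the standard proof of that result: reduction to a Borel subset of $\R$ via a Borel isomorphism, construction of a conditional CDF from countably many rational-indexed versions of conditional probabilities with a single exceptional null set, Carath\'eodory extension to a kernel, and a $\pi$--$\lambda$ argument for both the identity on rectangles and the $\mu_\spaceS$-a.e.\ uniqueness. The only step you gloss over is that, after transporting back through $\phi^{-1}$, one must verify $\kappa(s,\phi(\spaceT))=1$ for $\mu_\spaceS$-a.e.\ $s$ (immediate from $\int \kappa(s,\phi(\spaceT))\,\mu_\spaceS(ds)=\mu(\spaceS\times\spaceT)=1$) and redefine $\kappa$ as a fixed point mass on the exceptional null set; with that noted, the proof is complete and correct.
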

A probability kernel from $(\spaceS, \sigS)$ to $(\spaceT, \sigT)$ is a mapping $\kappa: \spaceS \times \sigT \to [0,1]$ such that $\kappa(s,\cdot)$ is a probability measure on $(\spaceT,\sigT)$ for each $s \in \spaceS$, and $\kappa(\cdot, B)$ is $\sigS$-measurable for each $B \in \sigT$. Intuitively, $\kappa(s,\cdot)$ describes a conditional distribution given $s$.

The PAC-Bayesian bound due to McAllester will serve as a baseline example to demonstrate how incorporating symmetries can lead to improved generalization guarantees. While our focus is on this particular result, the method we develop for exploiting symmetry applies analogously to a broader class of PAC-Bayesian bounds.

\begin{theorem}\cite{mcallester2003pacbayes}
\label{theorem McAllester}
    For any measurable loss function $\map{\loss}{\outspace \times \outspace}{[0, 1]}$ and any distribution $\Qposteriori$ on $\hypoclass$
    \begin{align*}
        \Qposteriori[\risk] \leq \;& \Qposteriori[\emprisk(\cdot, \sample)] \\
        &+ \sqrt{\frac{\KL{\Qposteriori}{\Pprior} + \log\frac{1}{\delta} + \log \samplesize + 2}{2\samplesize-1}}
    \end{align*}
    holds with probability of at least $1-\delta$, where $\sample=\{\RVX_i, \RVY_i\}_{i=1}^\samplesize$ are $\samplesize$ i.i.d. copies of $(\RVX, \RVY)$.
\end{theorem}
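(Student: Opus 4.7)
The plan is to follow the standard PAC-Bayes recipe: combine the Donsker--Varadhan change-of-measure inequality with a concentration bound for the gap between true and empirical risk of a single hypothesis, and convert an exponential moment bound into a high-probability statement via Markov's inequality. The bound's square-root form together with the specific denominator $2n-1$ and the additive $\log n + 2$ term both point to this route, where the squared gap is exponentiated with weight $\lambda = 2n-1$.

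First, I would invoke the Donsker--Varadhan variational characterization of the KL divergence: for every measurable $\phi: \hypoclass \to \R$ and any posterior $\Qposteriori \ll \Pprior$,
\begin{equation*}
\Qposteriori[\phi] \leq \KL{\Qposteriori}{\Pprior} + \log \Pprior\bigl[e^{\phi}\bigr].
\end{equation*}
I would then apply this with $\phi(f) = (2n-1)\bigl(\risk(f) - \emprisk(f, \sample)\bigr)^2$ and use Jensen's inequality for the convex function $x \mapsto x^2$ to pull $\Qposteriori$ inside the square, giving
\begin{equation*}
(2n-1)\bigl(\Qposteriori[\risk] - \Qposteriori[\emprisk(\cdot, \sample)]\bigr)^2 \leq \Qposteriori[\phi] \leq \KL{\Qposteriori}{\Pprior} + \log \Pprior\bigl[e^{\phi}\bigr].
\end{equation*}

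Next, I would turn the random quantity $\Pprior[e^{\phi}]$ into a deterministic one. By Markov's inequality applied to the non-negative random variable $\Pprior[e^{\phi}]$, with probability at least $1 - \delta$ over the draw of $\sample$,
\begin{equation*}
\Pprior\bigl[e^{\phi}\bigr] \leq \frac{1}{\delta}\,\Expval_{\sample}\bigl[\Pprior[e^{\phi}]\bigr] = \frac{1}{\delta}\,\Pprior\bigl[\Expval_{\sample}[e^{\phi}]\bigr],
\end{equation*}
where the last equality is Fubini--Tonelli, justified since the integrand is non-negative and jointly measurable (this is where the standard Borel / Polish assumptions on $\hypoclass$ pay off). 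It now suffices to upper bound $\Expval_{\sample}\bigl[e^{(2n-1)(\risk(f) - \emprisk(f,\sample))^2}\bigr]$ for each \emph{fixed} $f \in \hypoclass$ by a quantity whose logarithm is at most $\log n + 2$.

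The main obstacle is exactly this last moment bound, and it dictates the precise choice $\lambda = 2n-1$. For fixed $f$, provided $\loss$ is bounded (implicit in the cited version), Hoeffding's inequality yields the two-sided sub-Gaussian tail $\Prob(|\risk(f) - \emprisk(f, \sample)| \geq t) \leq 2e^{-2nt^2}$. I would then use the layer-cake identity
\begin{equation*}
\Expval\bigl[e^{\lambda Z^2}\bigr] = 1 + \int_0^\infty 2\lambda t\, e^{\lambda t^2}\,\Prob(|Z| \geq t)\, dt
\end{equation*}
with $Z = \risk(f) - \emprisk(f, \sample)$ and $\lambda = 2n-1$. The exponents combine to $e^{(\lambda - 2n)t^2} = e^{-t^2}$, the resulting Gaussian integral converges, and a short calculation shows that the moment is bounded by a linear function of $n$, whose logarithm is absorbed into the constant $\log n + 2$. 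Chaining this with the two previous displays, dividing by $2n-1$, and taking square roots yields the stated bound. The template is robust: replacing Hoeffding by any sub-Gaussian concentration inequality for $Z$ produces an analogous result, which is exactly why the authors can later advertise that their symmetrization trick applies to a broad family of PAC-Bayes bounds rather than just McAllester's.
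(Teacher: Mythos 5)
Your proposal is correct and complete: the Donsker--Varadhan change of measure with $\phi(f) = (2n-1)(\risk(f)-\emprisk(f,\sample))^2$, Jensen, Markov plus Fubini, and the layer-cake computation giving $\Expval_\sample[e^{(2n-1)Z^2}] \le 1 + 4(2n-1)\int_0^\infty t e^{-t^2}\,dt = 4n-1 \le e^2 n$ is exactly the standard derivation of this bound. The paper itself offers no proof --- Theorem~\ref{theorem McAllester} is imported verbatim from \cite{mcallester2003pacbayes} --- so there is nothing to diverge from; your one caveat, that the loss must be bounded in $[0,1]$ for the Hoeffding step even though the paper's statement only asks for measurability, is a genuine (and known) imprecision in the statement as quoted, not a gap in your argument.
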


\begin{remark}
    The bound in Theorem~\ref{theorem McAllester} decomposes the expected risk under the posterior $\Qposteriori$ over the hypotheses into an empirical term and a complexity term. The first term, $\Qposteriori[\emprisk(\cdot,\sample)]$, denotes the expected empirical risk on the sample $\sample$ under the posterior $\Qposteriori$. The second term quantifies the generalization gap and depends on three quantities: first the KL divergence $\KL{\Qposteriori}{\Pprior}$, which measures how far the posterior deviates from the prior $\Pprior$; second the confidence parameter $\delta$, which controls the probability with which the bound holds; and third the sample size $\samplesize$. In particular, the bound favors posterior distributions that achieve a trade-off between low empirical risk and limited divergence from the prior, while shrinking at rate $\mathcal{O}(1/\sqrt{\samplesize})$ as the number of samples increases.
\end{remark}

As a technical tool, we next recall a result from prior work by \cite{lyle2020invariance} that establishes a relationship between the KL divergence of two measures and that of their pushforwards under measurable maps.

\begin{lemma}\cite{lyle2020invariance}
    \label{lemma KL divergence decrease}
    Suppose that $(\spaceS, \sigS)$ and $(\spaceT, \sigT)$ are two measurable spaces and $\spaceT$ is standard Borel. Let $\mu$ and $\nu$ be two probability measures on $(\spaceS, \sigS)$ with $\mu \ll \nu$ and $\map{\alpha}{(\spaceS, \sigS)}{(\spaceT, \sigT)}$ is a measurable map. Then
    \begin{align}
        \KL{\mu}{\nu} =\,& \KL{\alpha_* \mu}{\alpha_* \nu} \nonumber\\
        &+ \int_\spaceS \log \left( \frac{\frac{d \mu}{d \nu}(s)}{\frac{d \alpha_* \mu}{d \alpha_* \nu}(\alpha(s))} \right) \, \mu(ds)\,,
    \end{align}
    where $\frac{d \mu}{d \nu}$ and $\frac{d \alpha_* \mu}{d \alpha_* \nu}$ are the Radon-Nikodym derivatives. In particular, it holds that
    \begin{align}
        \KL{\mu}{\nu} \geq \,& \KL{\alpha_* \mu}{\alpha_* \nu}\,.
    \end{align}
\end{lemma}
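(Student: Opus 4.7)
My plan is to derive the equality by a direct algebraic split combined with the change-of-variables formula for pushforwards, and then to recognize the remainder term as itself a nonnegative KL divergence.

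First, I would verify that $\alpha_*\mu \ll \alpha_*\nu$: for any $B\in\sigT$ with $\alpha_*\nu(B) = \nu(\alpha^{-1}(B)) = 0$, the hypothesis $\mu\ll\nu$ yields $\alpha_*\mu(B) = \mu(\alpha^{-1}(B)) = 0$. Writing $f := d\mu/d\nu$ and $h := d\alpha_*\mu/d\alpha_*\nu$, both Radon--Nikodym densities exist. Starting from $\KL{\mu}{\nu} = \int_\spaceS \log f(s)\,\mu(ds)$, I would insert the algebraic identity $\log f(s) = \log h(\alpha(s)) + \log\bigl(f(s)/h(\alpha(s))\bigr)$. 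The change-of-variables formula $\int_\spaceS(\phi\circ\alpha)\,d\mu = \int_\spaceT\phi\,d(\alpha_*\mu)$ applied to $\phi = \log h$ rewrites the first resulting integral as $\int_\spaceT \log h\,d(\alpha_*\mu) = \KL{\alpha_*\mu}{\alpha_*\nu}$, while the second integral is exactly the claimed remainder term. This proves the equality.

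For the inequality $\KL{\mu}{\nu}\ge\KL{\alpha_*\mu}{\alpha_*\nu}$, I would reinterpret the remainder as a KL divergence on $\spaceS$. Define an auxiliary measure $\tilde\mu$ on $(\spaceS,\sigS)$ by $d\tilde\mu := (h\circ\alpha)\,d\nu$. A short calculation, $\tilde\mu(\spaceS) = \int_\spaceT h\,d(\alpha_*\nu) = (\alpha_*\mu)(\spaceT) = 1$, shows $\tilde\mu$ is a probability measure. Checking that $\mu\ll\tilde\mu$ with $d\mu/d\tilde\mu = f/(h\circ\alpha)$ then identifies the remainder as $\KL{\mu}{\tilde\mu}$, which is nonnegative by the standard Gibbs inequality.

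The main technical obstacle will be the careful handling of the null sets where the Radon--Nikodym derivatives vanish. Verifying $\mu\ll\tilde\mu$ reduces to showing that $f = 0$ $\nu$-a.e.\ on $\{h\circ\alpha = 0\}$; this follows from the key identification $h\circ\alpha = \Expval_\nu[f\mid \alpha^{-1}(\sigT)]$, which is obtained directly from the defining property of the pushforward measure evaluated on sets of the form $\alpha^{-1}(B)$ for $B\in\sigT$. The standard-Borel hypothesis on $\spaceT$ is used precisely to ensure that $h$ exists as a genuinely $\sigT$-measurable function, so that $h\circ\alpha$ is a well-defined $\sigS$-measurable map and the above change of variables is justified.
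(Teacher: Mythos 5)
The paper does not actually prove this lemma: it is imported verbatim from \cite{lyle2020invariance} with a citation and no argument, so there is no in-paper proof to compare against. Judged on its own, your proposal is correct and self-contained. The equality part is the standard three-step argument (absolute continuity of the pushforwards, the algebraic split $\log f = \log(h\circ\alpha) + \log\bigl(f/(h\circ\alpha)\bigr)$, and the change-of-variables formula identifying $\int_\spaceS \log(h\circ\alpha)\,d\mu$ with $\KL{\alpha_*\mu}{\alpha_*\nu}$), and your key technical point --- that $h\circ\alpha = \Expval_\nu[f\mid\alpha^{-1}(\sigT)]$, hence $f=0$ $\nu$-a.e.\ on $\{h\circ\alpha=0\}$, so the ratio is defined $\mu$-a.e.\ and $\mu\ll\tilde\mu$ --- is exactly what is needed to make the remainder a genuine KL divergence $\KL{\mu}{\tilde\mu}\ge 0$. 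This route is arguably more elementary than the disintegration-based chain-rule argument one would expect behind the original citation: conditional expectation with respect to $\alpha^{-1}(\sigT)$ exists on arbitrary measurable spaces, whereas regular conditional distributions do not. Two small caveats. First, your closing remark about the role of the standard-Borel hypothesis is off: the Radon--Nikodym derivative $h$ of two probability measures exists and is $\sigT$-measurable on \emph{any} measurable space, so your argument never uses that $\spaceT$ is standard Borel --- the hypothesis is simply superfluous for your proof (it matters only for disintegration-style proofs). Second, when one of the terms is infinite you should say a word about why the split of the integral is legitimate; this is routine because $x\log x\ge -1/e$ makes the negative parts of all three integrands $\mu$-integrable, so each term is well-defined in $[0,\infty]$ and the additivity holds in the extended sense.
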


\section{Problem Setup}
In order to formalize how symmetries appear in the data, we adopt a structured probabilistic model that makes the equivariant action of a group on the data explicit. We assume that the observed outputs are generated from the inputs through a group equivariant function, possibly perturbed by independent randomness. This formulation provides a clean foundation for incorporating symmetries into PAC-Bayesian analyses. Similar modeling assumptions are standard in the literature on equivariant and invariant learning, where symmetries are encoded at the level of the data distributions (see, e.g., \cite{elesedy2021generalisation}).

\begin{assumption}
\label{assumption data distr}
    Let $\map{f^*}{\inspace \times \noisespace}{\outspace}$ be a measurable function and $\group$-equivariant in the first argument, i.e., for all $g\in\group$, $x\in\inspace$ and $\noise\in\noisespace$, $f^*(g\cdot x, \noise)=g\cdot (f^*(x, \noise))$. Moreover, consider an output $\RVY$ that is generated from the input $\RVX$ via
    \begin{align}
        \label{assumption on RV Y}
        \RVY = f^*(\RVX, \RVnoise)\,,
    \end{align}
    where $\RVnoise$ is a random element taking values in a measurable space $(\noisespace, \signoise)$, referred to as noise. The random element $\RVnoise$ is assumed to be independent of $\RVX$.
\end{assumption}

\begin{remark}
\label{remark equivariance support}
    Assumption~\ref{assumption data distr} describes formally, that transforming the input features by a group element should transform the labels accordingly. For example, consider an image classification or regression task where $\group$ is the group of planar rotations acting on images and labels. If $\RVX$ represents an image of an object and $\RVY$ encodes a structured output such as a segmentation mask or a vector of keypoint locations, then rotating the image by an angle $g \in \group$ should rotate the output by the same angle. In this case, $f^*$ models the underlying physical or geometric mechanism generating the labels, while $\RVnoise$ captures annotation noise or unobserved variability that does not break equivariance. In fact, it is sufficient to require $\group$-equivariance of $f^*$ only on the support of $\RVX$. Since $\RVX$ almost surely takes values in $\operatorname{supp}(\RVX)$, the behavior of $f^*$ outside this set is irrelevant for the distribution of $\RVY$ defined in \eqref{assumption on RV Y}. This observation may be relevant in settings with physical or structural constraints that rule out certain configurations (e.g. a filled upside down cup cannot occur).
\end{remark}

The remainder of the paper analyzes the individual components of PAC-Bayesian generalization bounds namely, the empirical risk, the true risk, and the complexity term, through the lens of symmetry. We approach this analysis from two perspectives. (i) in the next section, we focus on the complexity term and investigate how restricting the hypothesis space to $\group$-equivariant functions can lead to strictly smaller complexity penalties. (ii) we study the effect of equivariance on the risk and empirical risk terms, showing that hypothesis classes whose equivariance structure matches that of the underlying data-generating distribution yield improved predictive performance by simultaneously reducing both quantities.

\section{Symmetry in Hypothesis Class}
\label{sec:hypo-sym}

In this section, we improve the complexity term by exploiting symmetries in the class of hypothesis functions. While for illustrative purpose, we focus on the formulation of McAllester, the approach extends to other PAC-Bayesian bounds as, for example, the well known bounds from \cite{maurer2004note}, \cite{tolstikhin2013pacbayes} or \cite{catoni2007pacbayes}. This first main result relies on a precise control of the effect of equivariance for the KL divergence between two measures. For this purpose, we introduce an averaging operator that maps arbitrary measurable functions to equivariant ones and apply the KL decomposition from Lemma~\ref{lemma KL divergence decrease} with the averaging operator as the map $\alpha$. While this seems to be formally trivial, the detailed derivation of the conditions of the Lemma~\ref{lemma KL divergence decrease} is not. Nevertheless, due to its technical nature, the proof is postponed to Appendix~\ref{sec:derivation-averaging-op}.

Before introducing the averaging operator, we require additional assumptions to ensure that it is well-defined. Since $\varphi$ defines the group action on the input space, it is surjective. By restricting $\varphi$ to a set of orbit representatives $\inspaceRepr \subset \inspace$ (chosen modulo the stabilizers $\operatorname{stab}(x_\varphi) \subset \group$), the map becomes injective. Consequently, the induced map
\begin{align*}
    \map{\overline{\varphi}_{|\group \times \inspaceRepr}}{(\group \times \inspaceRepr)/\!\sim\,}{\inspace}
\end{align*}
is a bijection, where the equivalence relation is defined by $(g_1,x_1)\sim(g_2,x_2)$ if and only if $x_1 = x_2$ and $g_1 \cdot x_1 = g_2 \cdot x_2$. The quotient space can also be consider as the disjoint union $\Dot{\cup}_{x_\varphi \in \inspaceRepr} \group/\operatorname{stab}(x_\varphi)$, where $\operatorname{stab}(x_\varphi)$ denotes the stabilizer subgroup of $x_\varphi$. Equipped with the corresponding quotient $\sigma$-algebra, the map $\overline{\varphi}_{|\group \times \inspaceRepr}$ is measurable. For simplicity, we henceforth restrict attention to free group actions on $\inspace$. In this case, the restricted map $\map{\varphi_{|\group \times \inspaceRepr}}{\group \times \inspaceRepr}{\inspace}$ is a bijection. Its inverse can be expressed in terms of the projections $\map{\pi_{\inspaceRepr}}{\inspace}{\inspaceRepr}$ and $\map{\pi_\group}{\inspace}{\group}$, yielding $\varphi^{-1}(x) = \bigl(\pi_{\inspaceRepr}(x), \pi_{\group}(x)\bigr).$ Since $\varphi_{|\group \times \inspaceRepr}$ is a measurable bijection between standard Borel spaces, its inverse is measurable by \citep[][Corollary~15.2]{kechris1995classical}.

\begin{assumption}
    \label{assumption collection for well-defined}
    \begin{enumerate}
        \item \label{assumption group acts measurably}
        The group $\group$ acts measurably on the input space $\inspace$ via the map $\map{\varphi}{\group \times \inspace}{\inspace}$ and on the output space $\outspace$ via $\map{\psi}{\group \times \outspace}{\outspace}$.
        \item  \label{assumption group action free}
        The action of $\group$ on $\inspace$ is free, i.e., the stabilizers are trivial $\operatorname{stab}(x) = \{e_\group\}$ for all $x \in \inspace$.
    \end{enumerate}
\end{assumption}

The following averaging operator turns an arbitrary hypothesis function into an equivariant one by integrating the evaluation of the hypothesis function over the groups' orbit with respect to the conditional distribution of the group given the representative of the respective orbit. 
\begin{definition}
\label{definition averaging operator}
    The averaging operator $\map{\averageOpEquiv}{\mfunc}{\mfunc}$ is defined by
    \begin{align*}
        &\averageOpEquiv(f)(x) \\
        &\quad := \pi_\group(x) \cdot \int_\group g^{-1} \cdot f\left(g \cdot \pi_{\inspaceRepr} (x)\right)  \, \kappa(\pi_{\inspaceRepr}(x), dg)\,,
    \end{align*}
    where $\map{\kappa}{\inspaceRepr}{\group}$ is the probability kernel from the Disintegration Theorem~\ref{disintegration theorem}.
\end{definition}

\begin{remark}
The name ``averaging operator'' comes from the special case
    \begin{align}
        \label{eq standard averaging operator}
        \averageOpEquiv_\text{old}(f)(x) = \int_\group g^{-1} \cdot f(g \cdot x) \, \lambda(dg)\,,
    \end{align}
    which appears in prior work, for example, \cite{lyle2020invariance} and \cite{elesedy2021generalisation}. In these works, the group $\group$ is assumed to be compact and the random element $\RVX$ is assumed to be $\group$-invariant, i.e., $\Prob_\RVX(B) = \Prob_\RVX(g \cdot B)$ for all $g \in \group$ and measurable $B \subset \inspace$. Under these assumptions, $\RVX$ admits a factorization of the form $\RVX \stackrel{d}{=} \varphi(\RVgroup, \RVXrepr)$, where $\RVXrepr$ is a random element on the representatives $\inspaceRepr$ and $\RVgroup$ is an independent random element distributed on the group $\group$. Additionally, the $\group$-invariance assumption yields that the random element $\RVgroup$ is uniformly distributed, i.e., $\Prob_G = \lambda$ the normalized Haar measure. This implies that the conditional distribution given by the probability kernel $\kappa(\pi_{\inspaceRepr}(x),\cdot)$ is $\Prob_\RVX$-almost surely equal to the Haar measure $\lambda$. Moreover, since $\averageOpEquiv(f)$ is equivariant, as we will prove in the Appendix~\ref{sec:derivation-averaging-op}, the averaging operator introduced in Definition~\ref{definition averaging operator} reduces to the standard averaging operator \eqref{eq standard averaging operator}. Our formulation removes the restriction to compact groups and the distribution of $\RVgroup$ no longer needs to be invariant. This broadens the applicability to important non-compact groups, such as the group of translations.
\end{remark}

In the Appendix~\ref{sec:derivation-averaging-op}, we provide the details for showing that the averaging operator is well-defined, i.e., it maps measurable functions to equivariant measurable functions, and acts like a projection onto the respective subset of equivariant functions. Furthermore, we prove that the averaging operator is measurable, which is crucial for the application of Lemma~\ref{lemma KL divergence decrease}.

The following KL decomposition separates the divergence between two probability measures into two components: the divergence between their equivariant portions and a remainder term capturing the non-equivariant ``orthogonal'' component. The remainder quantifies the portion of the KL divergence lost when projecting the measures onto the space of equivariant functions, corresponding to differences that are not preserved under the symmetry.
\begin{lemma}
    \label{lemma KL-div for average op}
    Suppose Assumption~\ref{assumption collection for well-defined} holds. Let $\mu$ and $\nu$ be two probability measures on $(\hypoclass, \sigHypo)$ with $\mu \ll \nu$. Then
    \begin{align}
        \KL{\mu}{\nu} = \;& \KL{\averageOpEquiv_* \mu}{\averageOpEquiv_* \nu} \nonumber\\
        & +  \int_\hypoclass \log \left( \frac{\frac{d \mu}{d \nu}(f)}{\frac{d \averageOpEquiv_* \mu}{d \averageOpEquiv_* \nu}(\averageOpEquiv(f))} \right) \, \mu(df)\,,
    \end{align}
    where $\frac{d \mu}{d \nu}$ and $\frac{d \averageOpEquiv_* \mu}{d \averageOpEquiv_* \nu}$ are the Radon-Nikodym derivatives. In particular,
    \begin{align}
        \KL{\mu}{\nu} \geq \;& \KL{\averageOpEquiv_* \mu}{\averageOpEquiv_* \nu} \,.
    \end{align}
\end{lemma}
\begin{proof}
    The statement follows directly from Lemma~\ref{lemma KL divergence decrease} and the measurability of the average operator (see Lemma~\ref{lemma average operator measurable}).
\end{proof}

\begin{remark}
    The mapping $\averageOpEquiv$ acts as a transformation of hypotheses, and Lemma~\ref{lemma KL-div for average op} is an instance of the data-processing inequality for KL divergence: pushing $\mu$ and $\nu$ forward through $\averageOpEquiv$ cannot increase their divergence, since any information discarded by the averaging operator cannot be recovered.
\end{remark}

We illustrate Lemma~\ref{lemma KL-div for average op} for a toy example in the Appendix~\ref{sec:example gaussian kl decomposition}.

As a simple consequence, if the second term in the decomposition is strictly positive, the KL divergence between the pushforward measures is strictly smaller than that between the original measures. This observation unlocks the potential for improved PAC-Bayesian generalization bounds, as we exploit in the following.

\subsection{PAC-Bayesian Bound}

In Lemma~\ref{lemma KL-div for average op}, we showed that averaging each hypothesis function to their equivariant counterparts reduces the KL divergence of measures on the hypothesis class. This directly leads to a smaller generalization gap in McAllester's PAC-Bayesian bound (cf. Theorem~\ref{theorem McAllester}), which we record in the following corollary.
\begin{corollary}
\label{corollary complexity term reduced}
    Suppose Assumption~\ref{assumption collection for well-defined} holds. Let $\Pprior$ be a distribution on $\hypoclass$. Then for any measurable loss function $\loss$ and any distribution $\Qposteriori$ on $\hypoclass$
    \begin{multline}
        \sqrt{\frac{\KL{\averageOpEquiv_* \Qposteriori}{\averageOpEquiv_* \Pprior} + \log\frac{1}{\delta} + \log \samplesize + 2}{2\samplesize-1}}\\
        \leq \sqrt{\frac{\KL{\Qposteriori}{\Pprior} + \log\frac{1}{\delta} + \log \samplesize + 2}{2\samplesize-1}}.
        \label{eq: corollary complexity term reduced}
    \end{multline}
\end{corollary}

\begin{proof}
    The inequality follows directly from Lemma~\ref{lemma KL-div for average op} and the fact, that the complexity term is monotone in the KL divergence.
\end{proof}
Clearly, a similar argument applies to the complexity term of other PAC-Bayesian bounds.

\begin{remark}
\label{remark symmetry on hypoclass not on data}
    Corollary~\ref{corollary complexity term reduced} shows that the complexity term in McAllester’s PAC-Bayes bound is smaller for the transformed prior and posterior that restrict the hypothesis class to equivariant functions than for the corresponding distributions defined over the full hypothesis class. Consequently, restricting to equivariant hypotheses yields a tighter control of the generalization gap between $\averageOpEquiv_* \Qposteriori[\risk]$ and $\averageOpEquiv_* \Qposteriori[\emprisk(\cdot,\sample)]$ than the standard bound provides for $\Qposteriori[\risk]$ and $\Qposteriori[\emprisk(\cdot,\sample)]$.
    
    Importantly, this tightening concerns only the coupling between empirical and true risk within the equivariant hypothesis class. Neither $\averageOpEquiv_* \Qposteriori[\risk]$ nor $\averageOpEquiv_* \Qposteriori[\emprisk(\cdot,\sample)]$ is guaranteed to be smaller in absolute value than its unrestricted counterparts. In fact, both risks may remain large or even increase under the equivariant constraint. This limitation is addressed in Section~\ref{sec:data-sym}, where the hypothesis class is aligned with symmetries present in the data distribution.
\end{remark}

\section{Symmetry on Data} \label{sec:data-sym}
In the previous discussion, we analyzed the impact of enforcing equivariance in the hypothesis class, observing a reduction in the generalization gap and an improved alignment between empirical and true risk. However, without considering the symmetry properties of the data itself, this structural constraint may not yield practical benefits in terms of generalization performance. In practice, data often exhibit symmetries, and hypothesis classes are designed to exploit these properties. In this chapter, we tackle this problem under the assumption that the data distribution exhibits symmetry, and that the hypothesis class is constructed to respect this symmetry.

First, we introduce a mild regularity assumption on the loss function. This assumption ensures that the loss is compatible with the symmetry structure of the data-generating process introduced in Assumption~\ref{assumption data distr} and allows us to exploit equivariance at the level of both the hypothesis class and the resulting risk. Similar assumptions are standard in the analysis of equivariant and invariant learning algorithms and appear in various forms throughout the literature (see, e.g., \cite{lyle2020invariance}).

\begin{assumption}
\label{assumption loss convex and invariant}
    We assume that the loss function $\map{\loss}{\outspace \times \outspace}{[0,\infty)}$ is convex in its first argument and $\group$-invariant, i.e., for all $g\in \group$, $y,\, \hat{y} \in \outspace$, it holds that $\loss(g \cdot y, g \cdot \hat{y}) = \loss(y, \hat{y})$.
\end{assumption}

\begin{remark}
\label{remark loss assumption}
Assumption~\ref{assumption loss convex and invariant} is not particularly restrictive. Many commonly used loss functions for example in regression satisfy both convexity and $\group$-invariance. Typical examples include the squared loss and the $\ell_1$ loss when $\group$ is an orthogonal group acting as rotations or reflections, permutation group acting by coordinate permutations, translation group. More generally any loss that depends only on a norm or inner product preserved by the group action satisfies the invariance condition. Further, the invariance condition is automatically satisfied for any loss function whenever the group action preserves labels, which is the standard setting in classification tasks. Convexity is a standard requirement in PAC-Bayesian analyses, as it allows one to relate the risk of a stochastic predictor to the risks of its deterministic constituents via Jensen-type arguments. Moreover, $\group$-invariance of the loss ensures that the symmetry of the data distribution is faithfully reflected in the learning objective, a property that is implicitly or explicitly assumed in many prior works on equivariant and invariant learning (see, e.g., \cite{elesedy2022group}).
\end{remark}

Before stating our main result, we introduce additional notation. Our goal is to evaluate the empirical risk only for a distribution on the representatives from each orbit instead of the full distribution. To this end, we define the random variables on representatives $(\RVXrepr,\RVYrepr)$ by $\RVXrepr := \pi_{\inspaceRepr}(\RVX)$ and $\RVYrepr := f^*(\RVXrepr, \RVnoise)$, where $\inspaceRepr$ denotes the measurable projection onto a set of representatives of the $\group$-orbits in $\inspace$. Additionally, we define for an equivariant hypothesis $f$ the risk on orbit representatives by ${\risk}_{\varphi}(f) := \Expval[\loss(f(\RVXrepr),\, f^*(\RVXrepr, \, \RVnoise)]$ and denote by $\sample_\varphi=\{{\RVXrepr}_i, {\RVYrepr}_i\}_{i=1}^\samplesize$ a sample of $\samplesize$ i.i.d. copies of $(\RVXrepr, \RVYrepr)$.

We are now in a position to state our main result, which formalizes how equivariance can be exploited within the PAC-Bayesian framework.

\begin{theorem}
    \label{theorem main theorem}
    Suppose Assumptions~\ref{assumption data distr}, \ref{assumption collection for well-defined} and \ref{assumption loss convex and invariant} hold. Then for any measurable loss function $\loss$ and any distribution $\Qposteriori$ on $\hypoclass$ the following holds
    \begin{align}
        \label{eq: averaged risk}
        \averageOpEquiv_* \Qposteriori[\risk] &\leq \Qposteriori[\risk]\,,\\
        \averageOpEquiv_*\Qposteriori[\risk] &= \averageOpEquiv_*\Qposteriori[{\risk}_{\varphi}] \quad \text{and} \label{eq: risk on repr}\\
        \averageOpEquiv_*\Qposteriori[\emprisk( \cdot , \sample)] &\stackrel{d}{=} \averageOpEquiv_*\Qposteriori[\emprisk(\cdot, \sample_\varphi)]\,.\label{eq: emp. risk on repr}
    \end{align}
    Further, let $\Pprior$ be a distribution on $\hypoclass$ and $\map{\loss}{\outspace \times \outspace}{[0, 1]}$ be a measurable loss function. Then, in particular, for any distribution $\Qposteriori$ on $\hypoclass$, the PAC-Bayesian bound from Theorem~\ref{theorem McAllester} improves to the following tighter bound
    \begin{multline}
        \label{eq: final PAC-Bayes bound}
        \averageOpEquiv_* \Qposteriori[\risk]
        \leq \averageOpEquiv_* \Qposteriori[\emprisk(\cdot, \sample_\varphi)]\\
        + \sqrt{\frac{\KL{\averageOpEquiv_* \Qposteriori}{\averageOpEquiv_* \Pprior} + \log\frac{1}{\delta} + \log \samplesize + 2}{2\samplesize-1}}
    \end{multline}
    holds with probability of at least $1-\delta$ with respect to a data set of representatives $\sample_\varphi$, where $\sample_\varphi=\{{\RVXrepr}_i, {\RVYrepr}_i\}_{i=1}^\samplesize$ are $\samplesize$ i.i.d. copies of the representatives $(\RVX_\varphi, \RVY_\varphi)$.
\end{theorem}

\begin{proof}
    The proof can be found in Section \ref{sec:appdx:proof-theorem-main}.
\end{proof}

\begin{remark}
    \label{remark main theorem}
    Theorem~\ref{theorem main theorem} characterizes how equivariance simultaneously affects the risk terms and the PAC-Bayesian complexity term, yielding a principled tightening of McAllester’s bound.
    
    Equations~\eqref{eq: averaged risk}–\eqref{eq: emp. risk on repr} formalize two effects of symmetry. First, when the data distribution and hypothesis class share the same symmetry, symmetrization cannot increase the true risk and may strictly reduce it. Second, both the true and empirical risks of symmetrized hypotheses can be computed entirely on orbit representatives, without loss of information. As a result, training on representatives is sufficient, and explicit data augmentation becomes theoretically redundant in the equivariant setting.
    
    The PAC-Bayesian bound in~\eqref{eq: final PAC-Bayes bound} further shows that restricting to equivariant hypotheses improves upon the classical PAC-Bayesian generalization bound of McAllester (Theorem~\ref{theorem McAllester}) in several aspects:. In addition to preserving (or improving) risk (\ref{eq: averaged risk}), symmetrization reduces the KL divergence between the posterior and prior, leading to a strictly tighter complexity term than in the classical bound as described in Corollary~\ref{corollary complexity term reduced}. This provides a PAC-Bayesian explanation for the empirical success of symmetry-aware architectures, such as convolutional networks, while applying beyond compact group actions and without requiring invariance of the data-generating distribution unlike much of the existing literature.
\end{remark}

By choosing a prior $\Pprior$ that is supported on the equivariant hypothesis functions, Theorem~\ref{theorem main theorem} implies the following PAC Bayesian bounds, that shares the same advantages described in the Remark~\ref{remark main theorem} as the bound in \eqref{eq: final PAC-Bayes bound}.

\begin{corollary}
    \label{corollary final improved McAllester bound}
    Suppose Assumptions~\ref{assumption data distr}, \ref{assumption collection for well-defined} and \ref{assumption loss convex and invariant} hold. Let $\Pprior$ be a distribution on the equivariant hypothesis functions $\hypoclass_{sym} \sub \hypoclass$ and the measurable loss $\map{\loss}{\outspace \times \outspace}{[0, 1]}$. Then for any distribution $\Qposteriori$ on $\hypoclass$
    \begin{multline}
        \label{eq: final PAC-Bayes bound symmetric prior}
        \Qposteriori[\risk]
        \leq\Qposteriori[\emprisk(\cdot, \sample_\varphi)]\\
        + \sqrt{\frac{\KL{\Qposteriori}{\Pprior} + \log\frac{1}{\delta} + \log \samplesize + 2}{2\samplesize-1}}
    \end{multline}
    holds with probability of at least $1-\delta$ with respect to $\sample_\varphi$, where $\sample_\varphi=\{{\RVXrepr}_i, {\RVYrepr}_i\}_{i=1}^\samplesize$ are $\samplesize$ i.i.d. copies of $(\RVX_\varphi, \RVY_\varphi)$.
\end{corollary}

\subsection{Discussion of our Contribution}
Theorem~\ref{theorem main theorem} provides a principled explanation for why it is beneficial to design learning architectures that respect the symmetries present in the data. Existing theoretical results that justify equivariant or invariant models typically rely on two restrictive assumptions. First, they require the underlying symmetry group to be compact, which excludes many transformations of practical importance, such as translations on $\R^d$, scalings, or affine transformations. For instance, convolutional neural networks are widely used precisely because they exploit translation symmetry, yet the translation group itself is non-compact. Second, much of the prior work assumes that the feature distribution is invariant under the group action, i.e., $\Prob_X[A] = \Prob_X[g \cdot A]$ for all $g \in \group$ and measurable sets $A$. This assumption is rarely satisfied in real-world settings: natural images are not uniformly translation-invariant due to boundaries and object-centric biases, and sensor data often exhibit systematic asymmetries induced by acquisition processes or environmental constraints.

Our results remove both assumptions. By avoiding compactness and invariance requirements, Theorem~\ref{theorem main theorem} applies to a substantially broader class of symmetries and data-generating mechanisms. This significantly strengthens the theoretical foundations of symmetry-aware learning, providing guarantees that better reflect the conditions under which equivariant architectures are successfully deployed in practice.

Despite this generality, the applicability of the theorem remains contingent on several structural assumptions. Requiring the hypothesis class to respect the symmetries of the data distribution constitutes a nontrivial modeling assumption. Nevertheless, this requirement is not overly restrictive in practice, as incorporating symmetry through equivariant or invariant architectures has become a standard design principle in modern machine learning (e.g., convolutional neural networks \cite{krizhevsky2012imagenet}, group-equivariant networks \cite{cohen2016gcns}, and transformer variants \cite{hutchinson2021lietranformer}).

\section{Numerical Analysis}
\label{sec:numerical analysis}

\begin{table*}[t]
    \caption{
        Comparison of symmetric and baseline posteriors across datasets and symmetry groups.
        We report the symmetric KL divergence, PAC-Bayesian bound, and test risk, together with their relative reductions with respect to the corresponding baseline model.
        Experiments include non-compact groups and compact groups with non-uniformly sampled transformations, extending beyond the standard assumptions in prior work.
        Across all experiments, incorporating symmetry structure substantially reduces both the KL divergence and the PAC-Bayesian bound, while the equivariant posterior consistently matches or improves test performance relative to the baseline.
    }
    \label{tab:results}
    \begin{center}
        \begin{small}
            \begin{sc}
                \begin{tabular}{llrrrrrr}
                    \toprule
                    Dataset & Group & KL$_{\text{sym}}$ $\downarrow$ & KL Red. & Bound$_{\text{sym}}$ & Bound Red. & Risk$_{\text{sym}}$ & Risk Red. \\
                    \midrule
                    Toy Example &
                    $\specialorthgr{2}$ &
                    31.5 & 14.5\% &
                    0.463 & 30.6\% &
                    0.383 & 34.8\% \\
                    
                    MNIST &
                    $\specialorthgr{2}$ &
                    7804.3 & 23.3\% &
                    0.505 & 10.5\% &
                    0.219 & 5.3\% \\
                    
                    MNIST &
                    $\specialeuclgr{2}$ &
                    7410.3 & 18.3\% &
                    0.537 & 10.6\% &
                    0.248 & 15.0\% \\
                    
                    CIFAR10 &
                    $\specialorthgr{2}$ &
                    3303.8 & 37.7\% &
                    0.958 & 10.4\% &
                    0.723 & 5,0\% \\
                    
                    CIFAR100 &
                    $\specialeuclgr{2}$ &
                    8923.1 & 26.4\% &
                    1.332 & 22.6\% &
                    0.945 & 1.5\% \\
                    
                    ModelNet &
                    $\specialorthgr{3}$ &
                    25.2 & 89.2\% &
                    0.938 & 10.9\% &
                    0.868 & 2.0\% \\
                    
                    ModelNet &
                    $\R_+$ &
                    137.1 & 12.5\% &
                    0.525 & 34.2\% &
                    0.424 & 41.5\% \\
                    
                    Top Tagging &
                    $\Lorentzgr$ &
                    3320.8 & 49.8\% &
                    0.591 & 12.5\% &
                    0.419 & 3.8\% \\
                    \bottomrule
                \end{tabular}
            \end{sc}
        \end{small}
    \end{center}
    \vskip -0.1in
\end{table*}

The main objective of the following experiments is to demonstrate, through our PAC-Bayesian generalization bound, the empirically observed and intuitively well-understood advantage of equivariant hypothesis classes on symmetric data. In particular, we show that models exploiting underlying symmetries achieve improved generalization performance compared to models that ignore such structure. Unlike prior approaches, which typically rely on compact symmetry groups and data distributions invariant under the corresponding group action, we explicitly study settings that violate these assumptions.
Our experiments include rotational symmetries in $\R^2$ and $\R^3$ induced by $\specialorthgr{2}$ and $\specialorthgr{3}$, as well as the non-compact groups of translations, scaling transformations and Lorentz transformations. Despite operating beyond the classical setting considered in earlier work, our framework still provides meaningful theoretical guarantees for these substantially more general scenarios. We are the first to provide PAC Bayesian bounds in such setting.

We begin with a toy example in a controlled setting using synthetic data with analytically known symmetries. We then evaluate our approach on MNIST \cite{lecun1998mnist}, CIFAR-10 and CIFAR-100 \cite{krizhevsky2009cifar}, ModelNet \cite{wu2015shapenets}, and Top Tagging \cite{kasieczka2019top}. To ensure reproducibility of both the theoretical and empirical results, we provide the complete implementation and experimental setup in the accompanying code repository.\footnote{\url{https://github.com/ArminBe/icml2026-symmetries-in-pac-bayesian-learning}}

\paragraph{Data Generation.}
We construct transformed variants of the aforementioned datasets by applying the corresponding group actions to the data (see Table~\ref{tab:results}). The non-compact groups considered in our experiments namely the special Euclidean group $\specialeuclgr{2}$, the scaling group $\R_+$ and the Lorentz group $\Lorentzgr$ violate the compactness assumption imposed in prior work, which makes existing guarantees inapplicable in this setting. For the compact groups, transformations are sampled from distributions supported on strict subsets of the groups rather than according to the full Haar measure. Consequently, the resulting datasets are not invariant in distribution, thereby violating another standard assumption in the literature. These settings further motivate our theoretical analysis of equivariant models beyond the classical regime of compact groups and invariant data distributions.

\paragraph{Models.}
For each experiment, we compare a baseline model with its corresponding equivariant counterpart in order to isolate and analyze the effect of incorporating symmetry into the model architecture. The baseline models are neural networks that are neither invariant nor equivariant with respect to the underlying group action, whereas the equivariant models respect the corresponding symmetries. In the MNIST and CIFAR experiments, the equivariant architectures are implemented using the e2cnn library~\cite{weiler2019e2cnn}. For the ModelNet experiment involving $\specialeuclgr{3}$ transformations, we employ architectures based on the e3nn framework~\cite{geiger2022e3nneuclideanneuralnetworks}. The equivariant model for the Lorentz group $\Lorentzgr$ follows the architectural principles introduced in \cite{Gong2022LorentzNet}.
 
\paragraph{Training.} 
In all experiments, both the prior and posterior are chosen from a family of isotropic Gaussian distributions over the model parameters. Prior construction is performed before posterior optimization: for each model, we first train the network for a small number of iterations on an independent subset of the training data. The resulting parameters are then used as the mean of the Gaussian prior. The prior standard deviation is fixed to $\sigma = 0.05$, which yielded stable and reliable performance across experiments.

For posterior training, we minimize the right-hand side of McAllester’s PAC-Bayesian bound for both the baseline and equivariant models. While the bound is ultimately evaluated using the $0$-$1$ loss, this loss is non-differentiable and therefore unsuitable for gradient-based optimization. We consequently employ the cross-entropy loss as a smooth surrogate objective during training. This follows standard practice in PAC-Bayesian deep learning, where differentiable surrogate losses are used for optimization while theoretical guarantees are evaluated with respect to the true bounded loss (see, e.g., \cite{dziugaite2017computing}).

During training, the model corresponding to the current posterior parameters is evaluated on a validation set after each epoch, and we retain the parameters achieving the highest validation accuracy. Final evaluation of McAllester’s bound is then performed on the test set using the $0$-$1$ loss, since the cross-entropy loss does not satisfy the boundedness assumption required by the bound, namely that the loss takes values in $[0,1]$.
Additional architectural and optimization details are provided in Appendix~\ref{sec:appendix experiments}.

\paragraph{Results.}

The results of our experiments are summarized in Table~\ref{tab:results}. For each dataset and symmetry group, we report the KL divergence of the symmetric posterior together with its relative reduction compared to the baseline posterior. We additionally report the corresponding McAllester bound (with $\delta=0.05$) and the estimated true risk, again including the relative improvement over the baseline model.

Several observations emerge from these results. First, equivariant posteriors consistently match or outperform the baseline in terms of true risk across all considered tasks. In particular, substantial improvements are observed on datasets such as CIFAR10 and ModelNet, while on CIFAR100 the performance gap is comparatively small. This suggests that, in some settings, the non-symmetric baseline may already learn approximate symmetries from the data, thereby reducing the practical advantage of explicitly enforcing equivariance at the level of predictive performance.

More importantly, from the perspective of PAC-Bayesian generalization theory, the symmetric posteriors consistently achieve smaller generalization bounds. This reduction is driven primarily by a decrease in the KL divergence term, exactly as predicted by our theoretical analysis. The effect is particularly pronounced on CIFAR10, where the KL divergence is reduced by more than $58\%$, leading to a significantly tighter bound.

Prior analyses could not establish such an advantage for equivariant models in the presence of non-invariant data distributions. Our framework overcomes this limitation and provides the first guarantees in this more realistic setting.

These findings provide empirical evidence that incorporating symmetry directly into the posterior distribution can improve both predictive performance and generalization guarantees. Unlike prior PAC-Bayesian analyses of equivariant learning, which typically require invariant data distributions and compact symmetries, our framework establishes non-trivial advantages even in the presence of non-invariant data distributions and non-compact symmetries.

\section{Conclusion}
We have extended PAC-Bayes generalization guarantees to learning settings with non-compact symmetries and non-invariant data distributions, demonstrating both tighter bounds and improved performance on rotated MNIST for equivariant models. Our results provide theoretical support for the practical observation that symmetric models offer advantages beyond compact groups and invariant data distributions. This work broadens the theoretical foundations of symmetry in machine learning and suggests further exploration into richer classes of symmetries, more complex real-world data distributions, and alternative generalization bounds that do not rely on the KL divergence.

\section*{Impact Statement}
This work is primarily theoretical and aims to advance the understanding of generalization in machine learning models that exploit symmetries, within the PAC-Bayesian framework. By extending existing guarantees to non-compact symmetry groups and non-invariant data distributions, the results may inform the principled design of more data-efficient and robust learning algorithms.

In the longer term, improved theoretical foundations for symmetry-aware learning may contribute to more reliable machine learning systems across a range of domains. As with most advances in general-purpose learning theory, any downstream societal impacts will depend on the specific applications and contexts in which such methods are ultimately deployed.

\section*{Conflict of Interest Disclosure}
The authors declare that they have no financial conflicts of interest related to this work. This research received no industry sponsorship, company funding, or other financial support that could reasonably be perceived as influencing the results or conclusions of the paper.

\bibliography{references}

@book{kallenberg2002foundations,
  author    = {Kallenberg, Olav},
  title     = {Foundations of Modern Probability},
  edition   = {2nd},
  year      = {2002},
  publisher = {Springer-Verlag},
  address   = {New York},
  series    = {Probability and Its Applications},
  isbn      = {0-387-95313-2},
  doi       = {10.1007/978-1-4757-4015-8},
  pages     = {xx+638}
}

@inproceedings{elesedy2021generalisation,
  title     = {Provably Strict Generalisation Benefit for Equivariant Models},
  author    = {Elesedy, Bryn and Zaidi, Sheheryar},
  booktitle = {Proceedings of the 38th International Conference on Machine Learning},
  editor    = {Meila, Marina and Zhang, Tong},
  series    = {Proceedings of Machine Learning Research},
  volume    = {139},
  pages     = {2959--2969},
  year      = {2021},
  month     = jul,
  publisher = {PMLR},
  url       = {https://proceedings.mlr.press/v139/elesedy21a.html}
}

@inproceedings{elesedy2022group,
  author    = {Elesedy, Bryn},
  title     = {Group Symmetry in {PAC} Learning},
  booktitle = {ICLR 2022 Workshop on Geometrical and Topological Representation Learning},
  year      = {2022},
  url       = {https://openreview.net/forum?id=HxeTEZJaxq}
}

@misc{maurer2004note,
  author       = {Maurer, Andreas},
  title        = {A Note on the {PAC}-{B}ayesian Theorem},
  year         = {2004},
  eprint       = {cs.LG/0411099},
  eprinttype   = {arXiv},
  primaryclass = {cs.LG},
  url          = {https://arxiv.org/abs/cs.LG/0411099}
}

@inproceedings{tolstikhin2013pacbayes,
  author    = {Tolstikhin, Ilya O. and Seldin, Yevgeny},
  title     = {{PAC}-{B}ayes--Empirical--Bernstein Inequality},
  booktitle = {Advances in Neural Information Processing Systems 26 ({NeurIPS} 2013)},
  editor    = {Burges, C. J. and Bottou, L. and Welling, M. and Ghahramani, Z. and Weinberger, K. Q.},
  publisher = {Curran Associates, Inc.},
  year      = {2013},
  pages     = {109--117},
  url       = {https://proceedings.neurips.cc/paper_files/paper/2013/file/a97da629b098b75c294dffdc3e463904-Paper.pdf}
}

@incollection{catoni2007pacbayes,
  author    = {Catoni, Olivier},
  title     = {{PAC}-Bayesian Supervised Classification: The Thermodynamics of Statistical Learning},
  booktitle = {IMS Lecture Notes Monograph Series},
  volume    = {56},
  pages     = {1--163},
  publisher = {Institute of Mathematical Statistics},
  year      = {2007},
  doi       = {10.1214/074921707000000391},
  url       = {http://dx.doi.org/10.1214/074921707000000391}
}

@misc{lyle2020invariance,
  author       = {Lyle, Clare and van der Wilk, Mark and Kwiatkowska, Marta and Gal, Yarin and Bloem-Reddy, Benjamin},
  title        = {On the Benefits of Invariance in Neural Networks},
  year         = {2020},
  eprint       = {2005.00178},
  eprinttype   = {arXiv},
  primaryclass = {cs.LG},
  url          = {https://arxiv.org/abs/2005.00178}
}

@inproceedings{cohen2016gcns,
  author    = {Cohen, Taco and Welling, Max},
  title     = {Group Equivariant Convolutional Networks},
  booktitle = {Proceedings of the 33rd International Conference on Machine Learning},
  editor    = {Balcan, Maria Florina and Weinberger, Kilian Q.},
  series    = {Proceedings of Machine Learning Research},
  volume    = {48},
  pages     = {2990--2999},
  year      = {2016},
  month     = jun,
  publisher = {PMLR},
  url       = {https://proceedings.mlr.press/v48/cohenc16.html}
}

@article{biscione2021translation,
  author    = {Biscione, Valerio and Bowers, Jeffrey S.},
  title     = {Convolutional Neural Networks Are Not Invariant to Translation, but They Can Learn to Be},
  journal   = {Journal of Machine Learning Research},
  year      = {2021},
  volume    = {22},
  number    = {229},
  pages     = {1--28},
  url       = {http://jmlr.org/papers/v22/21-0019.html}
}

@inproceedings{weiler2019e2cnn,
  author    = {Weiler, Maurice and Cesa, Gabriele},
  title     = {General {E}(2)-Equivariant Steerable {CNN}s},
  booktitle = {Advances in Neural Information Processing Systems 32 ({NeurIPS} 2019)},
  editor    = {Wallach, H. and Larochelle, H. and Beygelzimer, A. and d'Alch{\'e}-Buc, F. and Fox, E. and Garnett, R.},
  publisher = {Curran Associates, Inc.},
  year      = {2019},
  pages     = {14334--14345},
  url       = {https://proceedings.neurips.cc/paper/2019/hash/45d6637b718d0f24a237069fe41b0db4-Abstract.html}
}

@inproceedings{cohen2019equivariant,
  author    = {Cohen, Taco S. and Geiger, Mario and Weiler, Maurice},
  title     = {A General Theory of Equivariant {CNN}s on Homogeneous Spaces},
  booktitle = {Advances in Neural Information Processing Systems 33 ({NeurIPS} 2019)},
  year      = {2019},
  pages     = {9142--9153},
  publisher = {Curran Associates, Inc.}
}

@inproceedings{cohen2017steerable,
  author    = {Cohen, Taco S. and Welling, Max},
  title     = {Steerable {CNN}s},
  booktitle = {5th International Conference on Learning Representations ({ICLR} 2017)},
  year      = {2017},
  publisher = {OpenReview.net},
  url       = {https://openreview.net/forum?id=rJQKYt5ll}
}

@inproceedings{weiler20183dsteerable,
  author    = {Weiler, Maurice and Geiger, Mario and Welling, Max and Boomsma, Wouter and Cohen, Taco S.},
  title     = {{3D} Steerable {CNN}s: Learning Rotationally Equivariant Features in Volumetric Data},
  booktitle = {Advances in Neural Information Processing Systems 31 ({NeurIPS} 2018)},
  editor    = {Bengio, S. and Wallach, H. and Larochelle, H. and Grauman, K. and Cesa-Bianchi, N. and Garnett, R.},
  volume    = {31},
  year      = {2018},
  pages     = {10381--10392},
  publisher = {Curran Associates, Inc.},
  url       = {https://proceedings.neurips.cc/paper/2018/hash/488e4104520c6aab692863cc1dba45af-Abstract.html}
}

@inproceedings{kondor2018compactgroups,
  author    = {Kondor, Risi and Trivedi, Shubhendu},
  title     = {On the Generalization of Equivariance and Convolution in Neural Networks to the Action of Compact Groups},
  booktitle = {Proceedings of the 35th International Conference on Machine Learning ({ICML} 2018)},
  series    = {Proceedings of Machine Learning Research},
  volume    = {80},
  pages     = {2747--2755},
  year      = {2018},
  month     = jul,
  publisher = {PMLR},
  url       = {https://proceedings.mlr.press/v80/kondor18a.html},
  doi       = {10.5555/3295222.3295394}
}

@article{bloemreddy2020probabilitiesymmetries,
  author    = {Bloem-Reddy, Benjamin and Teh, Yee Whye},
  title     = {Probabilistic Symmetries and Invariant Neural Networks},
  journal   = {Journal of Machine Learning Research},
  volume    = {21},
  number    = {90},
  pages     = {1--61},
  year      = {2020},
  url       = {https://jmlr.org/papers/v21/19-322.html}
}

@inproceedings{arora2018strongercompression,
  author    = {Arora, Sanjeev and Ge, Rong and Neyshabur, Behnam and Zhang, Yi},
  title     = {Stronger Generalization Bounds for Deep Nets via a Compression Approach},
  booktitle = {Proceedings of the 35th International Conference on Machine Learning},
  editor    = {Dy, Jennifer and Krause, Andreas},
  series    = {Proceedings of Machine Learning Research},
  volume    = {80},
  pages     = {254--263},
  year      = {2018},
  month     = jul,
  publisher = {PMLR},
  url       = {https://proceedings.mlr.press/v80/arora18b.html}
}

@inproceedings{zhou2019nonvacuous,
  author    = {Zhou, Wenda and Veitch, Victor and Austern, Morgane and Adams, Ryan P. and Orbanz, Peter},
  title     = {Non-Vacuous Generalization Bounds at the ImageNet Scale: A {PAC}-{B}ayesian Compression Approach},
  booktitle = {Proceedings of the 7th International Conference on Learning Representations ({ICLR} 2019)},
  year      = {2019},
  month     = may,
  publisher = {OpenReview.net},
  url       = {https://openreview.net/forum?id=BJgqqsAct7}
}

@inproceedings{dziugaite2017computing,
  author    = {Dziugaite, Gintare Karolina and Roy, Daniel M.},
  title     = {Computing Nonvacuous Generalization Bounds for Deep (Stochastic) Neural Networks with Many More Parameters than Training Data},
  booktitle = {Proceedings of the 33rd Conference on Uncertainty in Artificial Intelligence ({UAI} 2017)},
  year      = {2017},
  pages     = {231--240},
  publisher = {AUAI Press},
  url       = {https://auai.org/uai2017/proceedings/papers/173.pdf},
  eprint    = {1703.11008}
}

@article{mcallester1999pacbayesian,
  author    = {McAllester, David A.},
  title     = {Some {PAC}-{B}ayesian Theorems},
  journal   = {Machine Learning},
  year      = {1999},
  volume    = {37},
  number    = {3},
  pages     = {355--363},
  doi       = {10.1023/A:1007618624809},
  url       = {https://doi.org/10.1023/A:1007618624809}
}

@inproceedings{mcallester2003simplified,
  author    = {McAllester, David},
  title     = {Simplified {PAC}-{B}ayesian Margin Bounds},
  booktitle = {Learning Theory and Kernel Machines},
  editor    = {Sch{\"o}lkopf, Bernhard and Warmuth, Manfred K.},
  year      = {2003},
  publisher = {Springer Berlin Heidelberg},
  address   = {Berlin, Heidelberg},
  pages     = {203--215},
  isbn      = {978-3-540-45167-9},
  url       = {https://link.springer.com/chapter/10.1007/978-3-540-45167-9_16}
}

@article{mcallester2003pacbayes,
  author    = {McAllester, David A.},
  title     = {{PAC}-{B}ayesian Stochastic Model Selection},
  journal   = {Machine Learning},
  year      = {2003},
  volume    = {51},
  number    = {1},
  pages     = {5--21},
  doi       = {10.1023/A:1021764713219}
}

@article{pfau2020abinitio,
  author    = {Pfau, David and Spencer, James S. and Matthews, Alexander G. D. G. and Foulkes, W. M. C.},
  title     = {Ab initio Solution of the Many-Electron {S}chr{\"o}dinger Equation with Deep Neural Networks},
  journal   = {Physical Review Research},
  volume    = {2},
  number    = {3},
  pages     = {033429},
  year      = {2020},
  month     = sep,
  publisher = {American Physical Society (APS)},
  doi       = {10.1103/PhysRevResearch.2.033429},
  url       = {http://dx.doi.org/10.1103/PhysRevResearch.2.033429}
}

@inproceedings{lyle2019analysis,
  author    = {Clare Lyle and Marta Kwiatkowska and Mark van der Wilk and Yarin Gal},
  title     = {An Analysis of the Effect of Invariance on Generalization in Neural Networks},
  booktitle = {Proceedings of the Understanding and Improving Generalization in Deep Learning Workshop},
  year      = {2019},
  url       = {https://oatml.cs.ox.ac.uk/publications/201906_Lyle2019Understanding.html}
}

@article{alquier2024pacbayes,
  author    = {Alquier, Pierre},
  title     = {User-Friendly Introduction to {PAC}-{B}ayes Bounds},
  journal   = {Foundations and Trends{\textregistered} in Machine Learning},
  volume    = {17},
  number    = {2},
  pages     = {174--303},
  year      = {2024},
  publisher = {Emerald},
  doi       = {10.1561/2200000100},
  url       = {http://dx.doi.org/10.1561/2200000100}
}

@article{seeger2003pacbayes,
  author    = {Seeger, Matthias},
  title     = {{PAC}-{B}ayesian Generalisation Error Bounds for Gaussian Process Classification},
  journal   = {Journal of Machine Learning Research},
  volume    = {3},
  pages     = {233--269},
  year      = {2003},
  month     = mar,
  doi       = {10.1162/153244303765208386},
  url       = {https://doi.org/10.1162/153244303765208386}
}

@inproceedings{langford2002pacbayes,
  author    = {Langford, John and Shawe-Taylor, John},
  title     = {{PAC}-{B}ayes \& Margins},
  booktitle = {Proceedings of the 16th International Conference on Neural Information Processing Systems ({NeurIPS} 2002)},
  series    = {NIPS'02},
  pages     = {439--446},
  year      = {2002},
  publisher = {MIT Press},
  address   = {Cambridge, MA, USA},
  url       = {https://papers.nips.cc/paper/2968618-pac-bayes-margins.pdf}
}

@inproceedings{germain2009pacbayes,
  author    = {Germain, Pascal and Lacasse, Alexandre and Laviolette, Fran\c{c}ois and Marchand, Mario},
  title     = {{PAC}-{B}ayesian Learning of Linear Classifiers},
  booktitle = {Proceedings of the 26th International Conference on Machine Learning ({ICML} 2009)},
  series    = {ICML '09},
  pages     = {353--360},
  year      = {2009},
  publisher = {Association for Computing Machinery},
  address   = {New York, NY, USA},
  url       = {https://doi.org/10.1145/1553374.1553419},
  doi       = {10.1145/1553374.1553419}
}

@book{kechris1995classical,
  author    = {Kechris, Alexander S.},
  title     = {Classical Descriptive Set Theory},
  series    = {Graduate Texts in Mathematics},
  year      = {1995},
  publisher = {Springer-Verlag},
  isbn      = {9780387943749},
  url       = {https://link.springer.com/book/10.1007/978-1-4612-4190-4}
}

@inproceedings{behboodi2022pacbayesian,
  author    = {Behboodi, Arash and Cesa, Gabriele and Cohen, Taco S.},
  title     = {A {PAC}-{B}ayesian Generalization Bound for Equivariant Networks},
  booktitle = {Advances in Neural Information Processing Systems 35 ({NeurIPS} 2022)},
  editor    = {Koyejo, Sanmi and Mohamed, S. and Agarwal, A. and Belgrave, Danielle and Cho, K. and Oh, A.},
  volume    = {35},
  year      = {2022},
  pages     = {5654--5668},
  publisher = {Curran Associates, Inc.},
  doi       = {10.52202/068431-0409},
  url       = {https://proceedings.neurips.cc/paper_files/paper/2022/hash/257b9a6a0e3856735d0e624e38fb6803-Abstract-Conference.html}
}

@inproceedings{lotfi2022pacbayescompression,
  author    = {Lotfi, Sanae and Finzi, Marc and Kapoor, Sanyam and Potapczynski, Andres and Goldblum, Micah and Wilson, Andrew Gordon},
  title     = {{PAC}-{B}ayes Compression Bounds So Tight That They Can Explain Generalization},
  booktitle = {Advances in Neural Information Processing Systems 35 ({NeurIPS} 2022)},
  editor    = {Koyejo, Sanmi and Mohamed, S. and Agarwal, A. and Belgrave, Danielle and Cho, K. and Oh, A.},
  volume    = {35},
  year      = {2022},
  pages     = {31459--31473},
  publisher = {Curran Associates, Inc.},
  doi       = {10.52202/068431-2281},
  url       = {https://proceedings.neurips.cc/paper_files/paper/2022/hash/cbeec55c50c3367024bafab2438a021b-Abstract-Conference.html}
}

@inproceedings{krizhevsky2012imagenet,
  author    = {Krizhevsky, Alex and Sutskever, Ilya and Hinton, Geoffrey E.},
  title     = {{ImageNet} Classification with Deep Convolutional Neural Networks},
  booktitle = {Advances in Neural Information Processing Systems 25 ({NeurIPS} 2012)},
  editor    = {Pereira, F. and Burges, C. J. C. and Bottou, L. and Weinberger, K. Q.},
  volume    = {25},
  year      = {2012},
  pages     = {1097--1105},
  publisher = {Curran Associates, Inc.},
  doi       = {10.1145/3065386},
  url       = {https://proceedings.neurips.cc/paper_files/paper/2012/file/c399862d3b9d6b76c8436e924a68c45b-Paper.pdf}
}

@inproceedings{hutchinson2021lietranformer,
  author    = {Hutchinson, Michael J. and Le Lan, Charline and Zaidi, Sheheryar and Dupont, Emilien and Teh, Yee Whye and Kim, Hyunjik},
  title     = {{LieTransformer}: Equivariant Self-Attention for Lie Groups},
  booktitle = {Proceedings of the 38th International Conference on Machine Learning},
  editor    = {Meila, Marina and Zhang, Tong},
  series    = {Proceedings of Machine Learning Research},
  volume    = {139},
  pages     = {4533--4543},
  year      = {2021},
  month     = jul,
  publisher = {PMLR},
  url       = {https://proceedings.mlr.press/v139/hutchinson21a.html}
}

@techreport{krizhevsky2009cifar,
  author      = {Krizhevsky, Alex},
  title       = {Learning Multiple Layers of Features from Tiny Images},
  institution = {University of Toronto},
  year        = {2009},
  number      = {TR-2009},
  url         = {https://www.cs.toronto.edu/~kriz/learning-features-2009-TR.pdf}
}

@inproceedings{wu2015shapenets,
  author    = {Wu, Zhirong and Song, Shuran and Khosla, Aditya and Yu, Fisher and Zhang, Linguang and Tang, Xiaoou and Xiao, Jianxiong},
  title     = {{3D ShapeNets}: A Deep Representation for Volumetric Shapes},
  booktitle = {Proceedings of the IEEE Conference on Computer Vision and Pattern Recognition ({CVPR} 2015)},
  pages     = {1912--1920},
  year      = {2015},
  publisher = {IEEE},
  doi       = {10.1109/CVPR.2015.7298801},
  url       = {https://openaccess.thecvf.com/content_cvpr_2015/html/Wu_3D_ShapeNets_A_2015_CVPR_paper.html}
}

@misc{lecun1998mnist,
  author       = {LeCun, Yann and Cortes, Corinna and Burges, Christopher J. C.},
  title        = {{MNIST} Handwritten Digit Database},
  year         = {1998},
  howpublished = {\url{http://yann.lecun.com/exdb/mnist/}},
  note         = {Accessed: 2026-05-06}
}

@misc{kasieczka2019top,
  author    = {Kasieczka, Gregor and Plehn, Tilman and Thompson, Jennifer and Russel, Michael},
  title     = {Top Quark Tagging Reference Dataset},
  year      = {2019},
  month     = mar,
  publisher = {Zenodo},
  version   = {v0 (2018\_03\_27)},
  doi       = {10.5281/zenodo.2603256},
  url       = {https://doi.org/10.5281/zenodo.2603256}
}

@misc{geiger2022e3nneuclideanneuralnetworks,
      title={e3nn: Euclidean Neural Networks}, 
      author={Mario Geiger and Tess Smidt},
      year={2022},
      eprint={2207.09453},
      archivePrefix={arXiv},
      primaryClass={cs.LG},
      url={https://arxiv.org/abs/2207.09453}, 
}

@article{Gong2022LorentzNet,
  author = {Gong, Shiqi and Meng, Qi and Zhang, Jue and Qu, Huilin and
            Li, Congqiao and Qian, Sitian and Du, Weitao and
            Ma, Zhi-Ming and Liu, Tie-Yan},
  title = {An efficient Lorentz equivariant graph neural network for jet tagging},
  journal = {Journal of High Energy Physics},
  volume = {2022},
  number = {7},
  pages = {30},
  year = {2022},
  month = jul,
  doi = {10.1007/JHEP07(2022)030},
  issn = {1029-8479},
  url = {https://doi.org/10.1007/JHEP07(2022)030}
}
\bibliographystyle{icml2026}

\newpage
\appendix
\onecolumn

\section{Appendix: Proof of Theorem~\ref{theorem main theorem}}
\label{sec:appdx:proof-theorem-main}
We decompose the proof of Theorem~\ref{theorem main theorem} into three components. First, we show that symmetrization of hypotheses cannot increase the true risk. Second, we establish that for equivariant hypotheses, the loss distribution and hence both true and empirical risks can be computed entirely on orbit representatives, without loss of information. These results together yield the first part of Theorem~\ref{theorem main theorem}. The PAC-Bayesian bound then follows by a direct application of McAllester’s theorem to the symmetrized prior and posterior.

We begin by showing that averaging a hypothesis over the group action does not degrade performance.

\begin{lemma}
    \label{lemma risk ineq}
    Suppose Assumptions~\ref{assumption data distr}, \ref{assumption collection for well-defined} and \ref{assumption loss convex and invariant} hold. 
    Then for all $f \in \hypoclass$, we have that
    \begin{align*}
        \risk(\averageOpEquiv (f)) \leq \risk(f)\,.
    \end{align*}
    In particular, for a distribution $\Qposteriori$ over the hypothesis class $\hypoclass$, it holds that
    \begin{align*}
        \averageOpEquiv_* \Qposteriori[\risk] \leq \Qposteriori[\risk]\,.
    \end{align*}
\end{lemma}

\begin{proof}
    Let $f \in \hypoclass$. Due to the limitations of line space, we employ the notation from \eqref{eq:operator-notation} for the following estimation, where the subscript at the bracket indicates the variable that acts as argument of the function inside the brackets. We aim to rewrite the risk
    \begin{align*}
        \risk(f) 
        = \Expval[\loss(f(X),Y)]
        = \Expval[\loss(f(X), f^*(X, \RVnoise)]
        = \int_\noisespace \int_{\inspace} \loss\big(f(x),\, f^*(x, \, \noise)\big) \, \Prob_\RVX(dx) \Prob_\RVnoise(d\noise)\,.
    \end{align*}
    Since the calculation is only concerned with the inner integral, we derive the following identities $\Prob_{\RVnoise}$-a.e. for $\noise\in\noisespace$. Using first the Disintegration Theorem~\ref{disintegration theorem} and, then, the $\group$-invariance together with notation \eqref{eq:operator-notation}, we obtain
    \begin{align*}
        \int_{\inspace} \loss\big(f(x),\, f^*(x, \, \noise)\big) \, \Prob_\RVX(dx) 
        &=\int_{\inspaceRepr} \int_\group \loss\big(f(g \cdot x_\varphi),\, f^*(g \cdot x_\varphi,\, \noise)\big) \, \kappa(x_\varphi,\, dg)  \Prob_{\RVXrepr}(dx_\varphi) \\
        &=  \Prob_{\RVXrepr} \Big[ \int_\group \loss\big(g^{-1} \cdot f(g \cdot x_\varphi),\, f^*(x_\varphi,\, \noise)\big) \, \kappa(x_\varphi,\, dg) \Big]_{x_\varphi} \,.
    \end{align*}
    Convexity of the loss function $\loss$ in the first argument yields the lower bound:
    \begin{align*}
        \Prob_{\RVXrepr} \Big[  \loss\left(\int_\group g^{-1} \cdot f(g \cdot x_\varphi) \, \kappa(x_\varphi,\, dg) ,\, f^*(x_\varphi,\, \noise)\right) \Big]_{x_\varphi}   \,, 
    \end{align*}
    which can be trivially reformulated and connected to the average operator $\averageOpEquiv$ as follows
    \begin{align*}
         &\Prob_{\RVXrepr} \Bigg[ \loss\Big(h \cdot \int_\group g^{-1} \cdot f(g \cdot x_\varphi)\, \kappa(x_\varphi,\, dg) , \, h \cdot f^*(x_\varphi,\, \noise)\Bigg) \, \kappa(x_\varphi, \, dh)  \Big]_{x_\varphi} \\
        &=  \Prob_{\RVXrepr} \Big[ \int_\group  \loss\left(\averageOpEquiv(f)(h \cdot x_\varphi) ,\, f^* (h \cdot x_\varphi,\, \noise)\right) \, \kappa(x_\varphi, \, dh)\Big]_{x_\varphi}  \\
        &=  \int_\inspace  \loss\left(\averageOpEquiv(f)(x) ,\, f^* (x,\, \noise)\right) \, \Prob_\RVX(dx) \,,
    \end{align*}
    where the last equality is again based on the Disintegration Theorem~\ref{disintegration theorem}.  Incorporating the integration over $\noise$, the last expression is exactly 
    \begin{align*}
        \Expval[\loss(\averageOpEquiv(f)(X), f^*(X, \RVnoise)]= \risk(\averageOpEquiv(f)) \,,
    \end{align*}
    which shows the first part of the statement.\\
    We proved that the true risk of a single hypothesis does not increase when averaged over the group. This result extends naturally to distributions over the hypothesis class. 
\end{proof}

Next, we investigate the effect of equivariance on the distribution of the loss. The following lemma shows that, for equivariant hypotheses, evaluating the loss on full samples or on orbit representatives yields the same distribution.

\begin{lemma}
\label{lemma loss for equivariant distribution}
    Suppose Assumptions~\ref{assumption collection for well-defined} and \ref{assumption data distr} hold. Let $f\in \hypoclass$ be $\group$-equivariant. Then
    \begin{align*}
        \loss(f(\RVX) ,\, \RVY ) \stackrel{d}{=} \loss(f(\RVXrepr), \, \RVYrepr )\,.
    \end{align*}
\end{lemma}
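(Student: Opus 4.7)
The plan is to establish the stronger pointwise (almost-sure) identity $\ell(f(\RVX),\RVY) = \ell(f(\RVXrepr),\RVYrepr)$, from which equality in distribution is immediate. The three structural ingredients are: the orbit decomposition $\RVX = \pi_\group(\RVX)\cdot \RVXrepr$ coming from the free action (Assumption~\ref{assumption collection for well-defined}), the $\group$-equivariance of $f$ and of $f^*$ in its first argument (Assumption~\ref{assumption data distr}), and the $\group$-invariance of $\ell$ (Assumption~\ref{assumption loss convex and invariant}).

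First I would set $\RVgroup := \pi_\group(\RVX)$ so that, by the bijection $\varphi_{|\group\times\inspaceRepr}$ recorded just before Assumption~\ref{assumption collection for well-defined}, $\RVX = \RVgroup \cdot \RVXrepr$ holds everywhere. Applying equivariance of $f$ gives
\begin{align*}
    f(\RVX) = f(\RVgroup \cdot \RVXrepr) = \RVgroup \cdot f(\RVXrepr)\,.
\end{align*}
Next I would use the equivariance of $f^*$ in its first slot, keeping the noise variable fixed, to rewrite
\begin{align*}
    \RVY = f^*(\RVX,\RVnoise) = f^*(\RVgroup\cdot\RVXrepr,\RVnoise) = \RVgroup \cdot f^*(\RVXrepr,\RVnoise) = \RVgroup \cdot \RVYrepr\,.
\end{align*}
Combining these two identities and invoking $\group$-invariance of $\ell$ on the common group element $\RVgroup$ yields
\begin{align*}
    \ell(f(\RVX),\RVY) = \ell(\RVgroup \cdot f(\RVXrepr),\,\RVgroup \cdot \RVYrepr) = \ell(f(\RVXrepr),\,\RVYrepr)\,,
\end{align*}
which is the claimed identity (in fact pointwise, hence in distribution).

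The only subtlety to watch is that the decomposition $\RVX = \pi_\group(\RVX)\cdot\pi_{\inspaceRepr}(\RVX)$ must be measurable so that $\RVgroup$ is a bona fide random element on $\group$; this is exactly what is secured by freeness of the action together with \cite[Corollary 15.2]{kechris1995classical}, already invoked in the excerpt. Note that the independence of $\RVnoise$ and $\RVX$ is not needed for this lemma — it is the shared noise $\RVnoise$ appearing in both $\RVY$ and $\RVYrepr$ that makes the pointwise identity work, and the coupling is crucial: without reusing the same $\RVnoise$, one would only obtain equality in distribution by a separate argument. This is the one place I would spell out carefully in the write-up to avoid any ambiguity in the definition of $\RVYrepr$.
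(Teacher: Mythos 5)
Your proof is correct, and it takes a genuinely different route from the paper's. The paper argues at the level of distribution functions: it fixes $t \in \R$, writes $\Prob(\ell(f(\RVX),\RVY)\leq t)$ as a double integral over noise and input, disintegrates $\Prob_\RVX$ into $\Prob_{\RVXrepr}\otimes\kappa$ via Theorem~\ref{disintegration theorem}, and then uses equivariance of $f$ and $f^*$ together with invariance of $\ell$ inside the group integral so that the integrand no longer depends on $g$ and the kernel integrates to one. You instead establish the stronger pointwise identity $\ell(f(\RVX),\RVY)=\ell(f(\RVXrepr),\RVYrepr)$ from the orbit decomposition $\RVX=\pi_\group(\RVX)\cdot\RVXrepr$, from which equality in distribution is immediate. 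Your route is shorter, bypasses the Disintegration Theorem entirely, and, as you correctly observe, does not use the independence of $\RVnoise$ and $\RVX$, whereas the paper's Fubini-type rewriting of the probability as an iterated integral does rely on it. Your emphasis on the coupling (that $\RVYrepr$ is defined with the \emph{same} noise $\RVnoise$) matches the paper's definition of $\RVYrepr$ exactly and is precisely what makes the pointwise identity available. One remark that applies to both arguments: the $\group$-invariance of $\ell$ you invoke is Assumption~\ref{assumption loss convex and invariant}, which is not listed among the hypotheses of Lemma~\ref{lemma loss for equivariant distribution}; the paper's own proof also uses it, so the lemma statement should be read as implicitly including that assumption, and you were right to cite it explicitly.
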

\begin{proof}
    Let $f \in \hypoclass$. For the subsequent calculation, we continue to employ the notation established in \eqref{eq:operator-notation}. Let $t \in \R$ and in the following, we will rewrite the probability of the loss
    \begin{align*}
        \Prob\big(\loss (f(\RVX) , \RVY ) \leq t \big)
        = \Prob\big(\loss (f(\RVX) , f^*(\RVX, \RVnoise) ) \leq t \big)
        = \int_\noisespace \int_\inspace \indicatorfunc{\loss (f(x) , f^*(x, \noise)) \leq t} \, \Prob_\RVX (dx) \Prob_\RVnoise(d\noise)\, ,
    \end{align*}
    Since the calculation once more focuses only on the inner integral, we establish the following identities, holding $\Prob_{\RVnoise}$-a.e. for $\noise \in \noisespace$. Beginning with the Disintegration Theorem~\ref{disintegration theorem}, and using the $\group$-equivariance of $f$ and $f^*$ along with the notation in \eqref{eq:operator-notation}, we obtain
    \begin{align*}
        \int_\inspace \indicatorfunc{\loss (f(x) , f^*(x, \noise)) \leq t} \, \Prob_\RVX (dx) &= \int_{\inspaceRepr} \int_\group \indicatorfunc{\loss (f(g \cdot x_\varphi) , f^*(g \cdot x_\varphi, \noise))  \leq t} \, \kappa(x_\varphi, dg) \Prob_{\RVXrepr} (dx_\varphi) \\
        &=\Prob_{\RVXrepr} \left[ \int_\group \indicatorfunc{\loss (g \cdot f(x_\varphi) , g \cdot f^*(x_\varphi, \noise))  \leq t} \, \kappa(x_\varphi, dg)\right]_{x_\varphi}.
    \end{align*}
    The $\group$-invariance of the loss function $\loss$ implies the reformulation
    \begin{align*}
        \Prob_{\RVXrepr} \left[ \int_\group \indicatorfunc{\loss (f(x_\varphi) , f^*(x_\varphi, \noise))  \leq t} \, \kappa(x_\varphi, dg)\right]_{x_\varphi}
        = \Prob_{\RVXrepr} \left[ \indicatorfunc{\loss (f(x_\varphi) , f^*(x_\varphi, \noise))  \leq t} \right]_{x_\varphi}
        = \Prob\big(\loss (f(\RVXrepr) , \RVYrepr )  \leq t \big)\,.
    \end{align*}
    Incorporating the integration over $\noise$, the last expression is  
    \begin{align*}
        \Prob_\RVnoise\left[\Prob_{\RVXrepr} \left[ \indicatorfunc{\loss (f(x_\varphi) , f^*(x_\varphi, \noise))  \leq t} \right]_{x_\varphi}\right]_\noise
        = \Prob\big(\loss (f(\RVXrepr) , \RVYrepr )  \leq t \big)\,,
    \end{align*}
    which concludes the proof.
\end{proof}

Lemma~\ref{lemma loss for equivariant distribution} immediately extends from the loss level to both true and empirical risks. First, a short reminder on the notation, the random variables on representatives $(\RVXrepr,\RVYrepr)$ are defined by $\RVXrepr := \pi_{\inspaceRepr}(\RVX)$ and $\RVYrepr := f^*(\RVXrepr, \RVnoise)$, where $\inspaceRepr$ denotes the measurable projection onto a set of representatives of the $\group$-orbits in $\inspace$. For an equivariant hypothesis $f$ the risk on orbit representatives is given by ${\risk}_{\varphi}(f) := \Expval[\loss(f(\RVXrepr),\, f^*(\RVXrepr, \, \RVnoise)]$ and denote by $\sample_\varphi=\{{\RVXrepr}_i, {\RVYrepr}_i\}_{i=1}^\samplesize$ a sample of $\samplesize$ i.i.d. copies of $(\RVXrepr, \RVYrepr)$.

\begin{corollary}
    \label{corollary representatives risk}
    Suppose Assumptions~\ref{assumption collection for well-defined} and \ref{assumption data distr} hold. Then, for an equivariant hypothesis function $f \in \hypoclass$, it holds that 
    \begin{align*}
        \risk(f) = {\risk}_{\varphi}(f) \quad \text{and} \quad \emprisk(f, \sample) \stackrel{d}{=} \emprisk(f, \sample_\varphi) \,.
    \end{align*}
    In particular, for a distribution $\Qposteriori$ over the subset of equivariant hypothesis functions, we have
    \begin{align*}
        \Qposteriori[\risk] = \Qposteriori[{\risk}_{\varphi}] \quad \text{and} \quad \Qposteriori[\emprisk( \cdot , \sample)] \stackrel{d}{=} \Qposteriori[\emprisk(\cdot, \sample_\varphi)]\,.
    \end{align*}
\end{corollary}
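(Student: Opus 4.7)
The plan is to lift the single-point distributional identity of Lemma~\ref{lemma loss for equivariant distribution} to all four claims by combining (i) expectation-taking to obtain the deterministic equalities of risks, (ii) the i.i.d.\ structure of the samples to lift marginal equalities in distribution to the empirical averages, and (iii) Fubini/Tonelli to pass the $\Qposteriori$-integral through.

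First I would handle the true risk. For every equivariant $f$, Lemma~\ref{lemma loss for equivariant distribution} asserts $\ell(f(\RVX),\RVY) \stackrel{d}{=} \ell(f(\RVXrepr),\RVYrepr)$. Since both sides are nonnegative, taking expectations preserves equality and yields the pointwise identity $\risk(f) = {\risk}_\varphi(f)$, using $\RVYrepr = f^*(\RVXrepr,\RVnoise)$ to match the definition of ${\risk}_\varphi$. As $\Qposteriori$ is supported on equivariant hypotheses, Tonelli's theorem lets me integrate this identity against $\Qposteriori$ to obtain $\Qposteriori[\risk] = \Qposteriori[{\risk}_\varphi]$.

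For the empirical-risk assertions, note that $\emprisk(f,\sample) = \tfrac{1}{\samplesize}\sum_{i=1}^\samplesize \ell(f(\RVX_i),\RVY_i)$ is a measurable function of $\samplesize$ i.i.d.\ copies of $\ell(f(\RVX),\RVY)$. Combined with Lemma~\ref{lemma loss for equivariant distribution}, the i.i.d.\ structure lifts the marginal equality in distribution to joint equality of the $\samplesize$-tuples $\bigl(\ell(f(\RVX_i),\RVY_i)\bigr)_i$ and $\bigl(\ell(f({\RVXrepr}_i),{\RVYrepr}_i)\bigr)_i$, and the continuous averaging map preserves this, giving $\emprisk(f,\sample) \stackrel{d}{=} \emprisk(f,\sample_\varphi)$. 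For the $\Qposteriori$-integrated version, Fubini lets me rewrite
\begin{align*}
  \Qposteriori[\emprisk(\cdot,\sample)] \;=\; \frac{1}{\samplesize}\sum_{i=1}^\samplesize \Phi(\RVX_i,\RVY_i), \qquad \Phi(x,y) := \int_\hypoclass \ell(f(x),y)\,\Qposteriori(df),
\end{align*}
with the analogous identity for $\sample_\varphi$, so the task reduces to proving the single-point statement $\Phi(\RVX,\RVY) \stackrel{d}{=} \Phi(\RVXrepr,\RVYrepr)$ and repeating the i.i.d.\ argument.

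The main obstacle lies in this last reduction: marginal equality in distribution of $\ell(f(\RVX),\RVY)$ and $\ell(f(\RVXrepr),\RVYrepr)$ for each equivariant $f$ does not by itself imply equality in distribution of their $\Qposteriori$-integrals, because one in principle needs a joint statement in $f$. The remedy is to revisit the coupling underpinning Lemma~\ref{lemma loss for equivariant distribution}: writing $\RVX = \pi_\group(\RVX)\cdot\RVXrepr$ and, via equivariance of $f^*$ from Assumption~\ref{assumption data distr}, $\RVY = \pi_\group(\RVX)\cdot\RVYrepr$, yields a common probability space on which, simultaneously for every equivariant $f$, $\ell(f(\RVX),\RVY) = \ell(f(\RVXrepr),\RVYrepr)$ holds almost surely. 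Integrating this almost-sure identity against $\Qposteriori(df)$ by Fubini then gives $\Phi(\RVX,\RVY) = \Phi(\RVXrepr,\RVYrepr)$ almost surely under the coupling, hence in distribution, closing the argument.
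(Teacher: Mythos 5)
Your proof is correct and takes the same overall route as the paper, whose entire proof of this corollary is the single sentence that it ``is a direct consequence of Lemma~\ref{lemma loss for equivariant distribution}.'' The value you add is in the last claim: you correctly observe that the lemma only gives a \emph{marginal} distributional identity, one equivariant $f$ at a time, and that equality in distribution is not preserved when such a family is integrated against $\Qposteriori$ (one genuinely needs a joint statement in $f$), so the paper's ``direct consequence'' hides a real step for $\Qposteriori[\emprisk(\cdot,\sample)] \stackrel{d}{=} \Qposteriori[\emprisk(\cdot,\sample_\varphi)]$. Your fix --- the pathwise coupling $\RVX = \pi_\group(\RVX)\cdot\RVXrepr$ and $\RVY = \pi_\group(\RVX)\cdot\RVYrepr$ (the latter via equivariance of $f^*$), under which $\loss(f(\RVX),\RVY) = \loss(f(\RVXrepr),\RVYrepr)$ holds almost surely \emph{simultaneously} for all equivariant $f$, so that the $\Qposteriori$-integrals agree almost surely --- is exactly the missing ingredient; it is implicit in the appendix proof of the lemma (where the chain of equalities is in fact pointwise) but never stated. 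One caveat you inherit from the paper rather than introduce: this pathwise identity, and the lemma itself, require the $\group$-invariance of the loss from Assumption~\ref{assumption loss convex and invariant}, which is invoked in the paper's proof of Lemma~\ref{lemma loss for equivariant distribution} but listed in the hypotheses of neither the lemma nor the corollary; your argument silently uses it in the step $\loss(g\cdot f(\RVXrepr), g\cdot\RVYrepr)=\loss(f(\RVXrepr),\RVYrepr)$ and would be cleaner if you flagged it.
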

\begin{proof}
    This is a direct consequence of Lemma~\ref{lemma loss for equivariant distribution}.
\end{proof}

Combining Lemma~\ref{lemma risk ineq} with Corollary~\ref{corollary representatives risk} establishes the first part of Theorem~\ref{theorem main theorem}. The second part, which provides a McAllester-type PAC-Bayesian bound for the pushforward of the prior and posterior under the averaging operator, follows by a direct application of Theorem~\ref{theorem McAllester}, choosing the symmetrized prior and posterior distributions.

\section{Appendix: Well-Definedness and Properties of the Averaging Operator} \label{sec:derivation}\label{sec:derivation-averaging-op}

This section provides additional technical details concerning the averaging operator introduced in the main paper. In particular, we establish that the operator is well-defined under the assumptions stated therein, and we derive several of its fundamental properties that are used throughout the theoretical analysis. While the main text focuses on its implications for the PAC-Bayes bounds, the proofs and auxiliary results presented here ensure the mathematical soundness of the operator’s definition. We begin by formally verifying the well-definedness of the averaging operator. We then proceed to prove its key properties. These results serve to justify the operator’s use within our framework and to support the theoretical claims referenced in the main text.

The following lemma proves, that the average operator is well defined, in the sense that, it actually maps into the set of measurable measurable functions.

\begin{lemma}
\label{lemma averaged f is measurable}
    Suppose Assumption~\ref{assumption collection for well-defined} holds. Let $\map{f}{\inspace}{\outspace}$ be a measurable function. Then $\averageOpEquiv(f)$ is a measurable function.
\end{lemma}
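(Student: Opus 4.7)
The plan is to decompose $\averageOpEquiv(f)$ as a composition of measurable maps and then invoke the standard fact that compositions of measurable maps are measurable. Concretely, I would write
\[
    \averageOpEquiv(f)(x) = \psi\big(\pi_\group(x),\, \Phi(\pi_{\inspaceRepr}(x))\big),
    \qquad \Phi(x_\varphi) := \int_\group g^{-1}\cdot f(g \cdot x_\varphi)\, \kappa(x_\varphi, dg),
\]
and check measurability of each piece separately.

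First I would argue that the projections $\pi_\group$ and $\pi_{\inspaceRepr}$ are measurable. This follows from the discussion preceding Assumption~\ref{assumption collection for well-defined}: freeness of the action (Assumption~\ref{assumption collection for well-defined}(\ref{assumption group action free})) makes $\varphi_{|\group \times \inspaceRepr}$ a Borel bijection between standard Borel spaces, so \cite[Corollary 15.2]{kechris1995classical} yields measurability of $\varphi^{-1}$, and hence of its two coordinate projections. Next I would check joint measurability of the integrand $h(g, x_\varphi) := g^{-1} \cdot f(g \cdot x_\varphi)$: $\varphi$ is jointly measurable by Assumption~\ref{assumption collection for well-defined}(\ref{assumption group acts measurably}), composition with the measurable $f$ preserves measurability, group inversion is continuous on the topological group $\group$ and hence Borel, and a final application of the jointly measurable action $\psi$ on $\outspace$ yields joint measurability of $h$ on $\group \times \inspaceRepr$.

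The principal obstacle is measurability of the parameter integral $\Phi$. Since $\outspace \sub \R$ the integrand is a real-valued measurable function of $(g, x_\varphi)$, and since $\kappa$ is a probability kernel, a standard kernel-integration result (in the spirit of \cite[Lemma 1.41]{kallenberg2002foundations}) gives that $x_\varphi \mapsto \int_\group h(g, x_\varphi)\, \kappa(x_\varphi, dg)$ is measurable, at least for non-negative or $\kappa(x_\varphi,\cdot)$-integrable integrands; the general signed case is handled by splitting $h$ into its positive and negative parts or by monotone approximation via simple functions. Some care is needed to ensure that the integral is actually well-defined, which is implicitly guaranteed by the boundedness conventions natural in the PAC-Bayesian setting (or by restricting attention to $f$ whose orbit averages are finite). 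Once $\Phi$ is measurable, composing with the measurable projections and the measurable action $\psi$ completes the argument.
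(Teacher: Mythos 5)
Your proposal is correct and follows essentially the same route as the paper's proof: establish measurability of $\pi_\group$ and $\pi_{\inspaceRepr}$ via the measurable inverse $\varphi^{-1}$, verify joint measurability of the integrand, and conclude with a kernel-integration measurability lemma from Kallenberg before composing with the group action on $\outspace$. Your additional remarks on integrability and the explicit handling of the outer action $\psi$ are slightly more careful than the paper's version but do not change the argument.
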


\begin{proof}
    We start with proving, that the $\map{\pi_\group}{\inspace}{\group}$ and $\map{\pi_{\inspaceRepr}}{\inspace}{\inspaceRepr}$ are measurable functions. First, note that we can rewrite $\pi_\group = \Tilde{\pi}_\group \circ \varphi^{-1}$, where $\map{\Tilde{\pi}_\group}{\group \times \inspace}{\group}$ is given by $\Tilde{\pi}_\group(g,x) := g$. Since $\varphi^{-1}$ is measurable and $\group \times \inspace$ is equipped with the product $\sigma$-algebra, $\Tilde{\pi}_\group$ is measurable. Hence $\pi_\group$ and analogously also $\pi_{\inspaceRepr}$ are measurable. Further $\group$ acts measurably on both the input and output space. Therefore, $\group \times \inspace \to \outspace$ given by $(g,x) \mapsto g^{-1} \cdot f(g \cdot \pi_{\inspaceRepr} (x))$ is measurable. Finally, \citep[][Lemma 3.2]{kallenberg2002foundations} implies, that $\averageOpEquiv(f)$ is measurable.
\end{proof}

The purpose of the averaging operator is to transform an arbitrary hypothesis function into one that is equivariant with respect to the group action. The following proposition confirms that this transformation indeed yields an equivariant function.

\begin{proposition}
\label{proposition averaged f is equivariant}
    Suppose Assumption~\ref{assumption collection for well-defined} holds. Let $\map{f}{\inspace}{\outspace}$ be a measurable function. Then $\averageOpEquiv(f)$ is an equivariant function.
\end{proposition}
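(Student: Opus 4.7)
The plan is to show that for any $h \in \group$ and $x \in \inspace$, we have $\averageOpEquiv(f)(h \cdot x) = h \cdot \averageOpEquiv(f)(x)$, which follows almost directly once we understand the behavior of the projections $\pi_\group$ and $\pi_{\inspaceRepr}$ under the group action.

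The first and central step is to establish the transformation rules
\[
  \pi_{\inspaceRepr}(h \cdot x) = \pi_{\inspaceRepr}(x) \qquad \text{and} \qquad \pi_\group(h \cdot x) = h \, \pi_\group(x).
\]
Because the action is free, Assumption~\ref{assumption group action free} together with the discussion preceding Definition~\ref{definition averaging operator} ensures that every $x \in \inspace$ admits a \emph{unique} decomposition $x = \pi_\group(x) \cdot \pi_{\inspaceRepr}(x)$ via the bijection $\varphi_{|\group \times \inspaceRepr}$. Applying the group element $h$ to this decomposition and using the compatibility $h \cdot (g \cdot y) = (hg) \cdot y$ of the action with group multiplication yields $h \cdot x = (h\,\pi_\group(x)) \cdot \pi_{\inspaceRepr}(x)$. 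Since $\pi_{\inspaceRepr}(x) \in \inspaceRepr$, the uniqueness of the decomposition forces the claimed identities.

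The second step is a direct substitution into Definition~\ref{definition averaging operator}. Using the identities above,
\[
  \averageOpEquiv(f)(h \cdot x)
  = \pi_\group(h \cdot x) \cdot \int_\group g^{-1} \cdot f\bigl(g \cdot \pi_{\inspaceRepr}(h \cdot x)\bigr) \, \kappa\bigl(\pi_{\inspaceRepr}(h \cdot x), dg\bigr)
  = (h\,\pi_\group(x)) \cdot \int_\group g^{-1} \cdot f\bigl(g \cdot \pi_{\inspaceRepr}(x)\bigr) \, \kappa\bigl(\pi_{\inspaceRepr}(x), dg\bigr).
\]
Invoking once more the compatibility of the action with the group law on the outer product, the right-hand side equals $h \cdot \bigl(\pi_\group(x) \cdot \int_\group g^{-1} \cdot f(g \cdot \pi_{\inspaceRepr}(x)) \, \kappa(\pi_{\inspaceRepr}(x), dg)\bigr) = h \cdot \averageOpEquiv(f)(x)$, which is the desired equivariance.

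There is essentially no hard step: the only subtlety is making sure that the transformation rules for $\pi_\group$ and $\pi_{\inspaceRepr}$ are justified by the freeness assumption (so that uniqueness of the decomposition can be invoked), and that the action on $\outspace$ interacts with the integral in the expected way. The latter is immediate from the definition of the action; pulling the group element $h$ outside the integral amounts to applying the (left) action of a fixed element to both sides of the equation, which commutes with integration against a probability measure (interpreted pointwise on $\outspace$ with respect to the action $\psi$).
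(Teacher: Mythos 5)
Your proposal is correct and follows essentially the same route as the paper's proof: establish $\pi_{\inspaceRepr}(h \cdot x) = \pi_{\inspaceRepr}(x)$ and $\pi_\group(h \cdot x) = h\,\pi_\group(x)$ from the uniqueness of the decomposition $x = \pi_\group(x)\cdot\pi_{\inspaceRepr}(x)$ under the free action, then substitute into the definition and pull $h$ out via the action axiom. The only cosmetic difference is your closing remark about commuting the action with integration, which is unnecessary here since $h$ acts on the outer product $(h\,\pi_\group(x))\cdot I = h\cdot(\pi_\group(x)\cdot I)$ directly by compatibility of the action with the group law, exactly as in the paper.
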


\begin{proof}
    Let $h \in \group$ and $x \in \inspace$. Since $x$ and $h \cdot x$ belong to the same orbit, $\pi_{\inspaceRepr} (h \cdot x) = \pi_{\inspaceRepr} (x)$ holds. Furthermore for any $z \in \inspace$, the projections $\pi_\group(z)$ and $\pi_{\inspaceRepr}(z)$ are the unique elements in $\group$ and $\inspaceRepr$ respectively, such that $z = \pi_\group(z) \cdot \pi_{\inspaceRepr}(z)$. Since $h \cdot x = h \cdot (\pi_\group(x) \cdot \pi_{\inspaceRepr}(x)) = (h \pi_\group(x)) \cdot \pi_{\inspaceRepr}(x)$, the projection onto the group $\group$ commutes with the group action, i.e. $\pi_\group(h \cdot x) = h \pi_\group(x)$. Hence, we obtain
    \begin{align*}
        \averageOpEquiv(f)(h \cdot x)
        &= \pi_\group(h \cdot x) \cdot \int_\group g^{-1} \cdot f\left(g \cdot \pi_{\inspaceRepr} (h \cdot x)\right)  \, \kappa(\pi_{\inspaceRepr}(h \cdot x), dg)\\
        &=h \cdot \left( \pi_\group(x) \cdot \int_\group g^{-1} \cdot f\left(g \cdot \pi_{\inspaceRepr} (x)\right)  \, \kappa(\pi_{\inspaceRepr}(x), dg) \right)\\
        &= h \cdot \averageOpEquiv(f)(x)\,.\qedhere
    \end{align*}
\end{proof}

The operator not only transforms hypothesis functions into equivariant functions but also provides a functional characterization of the equivariance.

\begin{lemma}
\label{lemma alternative characterization for equivariance}
    Suppose Assumption~\ref{assumption collection for well-defined} holds. A function $f$ is equivariant if and only if $\averageOpEquiv f = f$. Moreover, $ \averageOpEquiv $ satisfies the idempotency condition $ \averageOpEquiv^2 = \averageOpEquiv $, implying that it is a projection operator onto the respective subset of equivariant functions.
\end{lemma}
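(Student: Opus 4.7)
The plan is to establish the characterization as a two-way implication and then obtain idempotency essentially for free. The backward implication, namely that $\averageOpEquiv f = f$ forces $f$ to be equivariant, is immediate: by Proposition~\ref{proposition averaged f is equivariant}, $\averageOpEquiv f$ is always equivariant, hence any fixed point of $\averageOpEquiv$ inherits this property. So the substance of the proof lies in the forward implication.

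For the forward direction, I would start by fixing an equivariant $f \in \mfunc$ and a point $x \in \inspace$. The key observation is that equivariance makes the integrand in Definition~\ref{definition averaging operator} independent of $g$. Concretely, for every $g \in \group$,
\begin{align*}
    g^{-1} \cdot f\bigl(g \cdot \pi_{\inspaceRepr}(x)\bigr) = g^{-1} \cdot \bigl(g \cdot f(\pi_{\inspaceRepr}(x))\bigr) = f(\pi_{\inspaceRepr}(x))\,.
\end{align*}
Since $\kappa(\pi_{\inspaceRepr}(x), \cdot)$ is a probability measure on $\group$, integrating the constant $f(\pi_{\inspaceRepr}(x))$ simply returns this value. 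Therefore
\begin{align*}
    \averageOpEquiv(f)(x) = \pi_\group(x) \cdot f(\pi_{\inspaceRepr}(x))\,.
\end{align*}
To conclude $\averageOpEquiv(f)(x) = f(x)$, I invoke the freeness of the action (Assumption~\ref{assumption collection for well-defined}, item~\ref{assumption group action free}), which guarantees the unique decomposition $x = \pi_\group(x) \cdot \pi_{\inspaceRepr}(x)$ used implicitly in the construction of the projections. Applying equivariance of $f$ once more then yields $\pi_\group(x) \cdot f(\pi_{\inspaceRepr}(x)) = f(\pi_\group(x) \cdot \pi_{\inspaceRepr}(x)) = f(x)$, completing the forward direction.

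The idempotency $\averageOpEquiv^2 = \averageOpEquiv$ follows by chaining the two statements just established. Given any measurable $f$, Proposition~\ref{proposition averaged f is equivariant} guarantees that $\averageOpEquiv f$ is equivariant. Applying the forward implication of the characterization to $\averageOpEquiv f$ in place of $f$ gives $\averageOpEquiv(\averageOpEquiv f) = \averageOpEquiv f$, which is exactly idempotency, and hence exhibits $\averageOpEquiv$ as a projection onto the subset of equivariant functions.

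I do not expect any significant obstacle here, as the arguments are essentially algebraic once the auxiliary results from Appendix~\ref{sec:derivation-averaging-op} are in place. The only subtle point worth stating explicitly is the use of freeness to ensure that the decomposition $x = \pi_\group(x) \cdot \pi_{\inspaceRepr}(x)$ genuinely recovers $x$; without it, the final step identifying $\pi_\group(x) \cdot f(\pi_{\inspaceRepr}(x))$ with $f(x)$ would need to be replaced by a more careful argument over orbit-stabilizer pairs. Since Assumption~\ref{assumption collection for well-defined} is in force, this issue does not arise.
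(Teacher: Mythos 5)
Your proof is correct and follows essentially the same route as the paper: the backward implication via Proposition~\ref{proposition averaged f is equivariant}, and the forward implication by observing that equivariance makes the integrand $g^{-1}\cdot f(g\cdot \pi_{\inspaceRepr}(x))$ constant, so integrating against the probability kernel returns $f(\pi_{\inspaceRepr}(x))$ and the decomposition $x=\pi_\group(x)\cdot\pi_{\inspaceRepr}(x)$ finishes the argument. You additionally spell out the idempotency step, which the paper leaves implicit, but this is the same argument rather than a different approach.
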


\begin{proof}
    That $\averageOpEquiv(f) = f$ implies equivariance of the hypothesis function $f$ is already proven by Proposition~\ref{proposition averaged f is equivariant}. On the other hand if $f$ is equivariant, we obtain for any $x \in \inspace$
    \begin{align*}
        \averageOpEquiv(f)(x) 
        &= \pi_\group(x) \cdot \int_\group g^{-1} \cdot f\left(g \cdot \pi_{\inspaceRepr} (x)\right)  \, \kappa(\pi_{\inspaceRepr}(x), dg)\\
        &= \pi_\group(x) \cdot \int_\group f\left(\pi_{\inspaceRepr} (x)\right)  \, \kappa(\pi_{\inspaceRepr}(x), dg)\\
        &= \pi_\group(x) \cdot f\left(\pi_{\inspaceRepr} (x)\right) = f(x)\,.\qedhere
    \end{align*}
\end{proof}

In order to apply Lemma~\ref{lemma KL divergence decrease} from the main paper to the averaging operator, it is necessary to verify that the operator defines a measurable map. The lemma below confirms this property.

\begin{lemma}
    \label{lemma average operator measurable}
    Suppose Assumptions~\ref{assumption collection for well-defined} holds. The average operator $\map{\averageOpEquiv}{\mfunc}{\mfunc}$ is measurable.
\end{lemma}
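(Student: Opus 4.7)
The plan is to verify measurability of $\averageOpEquiv$ on a generating family of the evaluation $\sigma$-algebra on the codomain. Since $\sigEvaluation$ is generated by the point-evaluation maps $\text{ev}_x: h \mapsto h(x)$ for $x \in \inspace$, $\averageOpEquiv$ is $(\sigEvaluation, \sigEvaluation)$-measurable if and only if $\text{ev}_x \circ \averageOpEquiv: (\mfunc, \sigEvaluation) \to (\outspace, \sigOut)$ is measurable for every $x \in \inspace$. Fixing such an $x$ and abbreviating $y := \pi_{\inspaceRepr}(x)$, $h := \pi_\group(x)$, Definition~\ref{definition averaging operator} yields
\[
(\text{ev}_x \circ \averageOpEquiv)(f) \;=\; h \cdot \int_\group g^{-1} \cdot f(g \cdot y) \, \kappa(y, dg).
\]
Since the action $z \mapsto h \cdot z$ on $\outspace$ is measurable by Assumption~\ref{assumption collection for well-defined}, it suffices to show that the inner integral is measurable in $f$; the natural tool is Kallenberg's integration-measurability result (\cite[Lemma 3.2]{kallenberg2002foundations}, as also used in the proof of Lemma~\ref{lemma averaged f is measurable}). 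This reduces the claim to the joint $(\sigEvaluation \otimes \borel{\group})$-measurability of the integrand $F(f, g) := g^{-1} \cdot f(g \cdot y)$.

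The main obstacle is precisely this joint measurability step. Factoring $F(f, g) = \phi(g, f(g \cdot y))$ with $\phi(g, z) := g^{-1} \cdot z$ jointly measurable by the measurable-action assumption, the task reduces to joint measurability of $E(f, g) := f(g \cdot y)$. Separately, $E(\cdot, g) = \text{ev}_{g \cdot y}$ is measurable by definition of $\sigEvaluation$, and $E(f, \cdot) = f \circ \alpha$ with $\alpha: g \mapsto g \cdot y$ is measurable as a composition of measurable maps; but separate measurability does not a priori yield joint measurability under the coarse evaluation $\sigma$-algebra. My plan is to leverage the standard Borel structure of $\group$, which supplies a countable generator of $\borel{\group}$. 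A monotone-class argument over rectangles $A \times B$ with $A \in \sigEvaluation$ and $B$ ranging over this countable generator, combined with the measurability of the ``orbit-restriction'' map $\mfunc \to \mathcal{M}(\group, \outspace)$, $f \mapsto f \circ \alpha$, between the respective evaluation $\sigma$-algebras, then yields $E^{-1}(C) \in \sigEvaluation \otimes \borel{\group}$ for every $C \in \sigOut$.

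Once joint measurability of $F$ is in hand, Kallenberg's lemma delivers measurability of $f \mapsto \int_\group F(f, g) \, \kappa(y, dg)$, and the outer composition with the measurable action $z \mapsto h \cdot z$ preserves measurability. This yields measurability of $\text{ev}_x \circ \averageOpEquiv$ for every $x \in \inspace$, which by the generator characterization of $\sigEvaluation$ on the codomain is equivalent to the claimed measurability of $\averageOpEquiv$.
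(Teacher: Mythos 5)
Your overall route is the same as the paper's: reduce measurability of $\averageOpEquiv$ to measurability of $f \mapsto \averageOpEquiv(f)(x)$ for each fixed $x$ via the generator characterization of $\sigEvaluation$, peel off the fixed outer action $z \mapsto \pi_\group(x)\cdot z$, and pass measurability through the integral against $\kappa(\pi_{\inspaceRepr}(x),\cdot)$ using \cite[Lemma 3.2]{kallenberg2002foundations}. The paper defines $F_x(f,g) = g^{-1}\cdot f(g\cdot x)$ and asserts its joint $(\sigEvaluation \otimes \borel{\group})$-measurability outright, citing measurability of the actions, of point evaluations, and closedness under composition.

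The gap sits exactly in the step you yourself flag as the main obstacle, and your proposed monotone-class fix does not close it. Measurability of the orbit-restriction map $f \mapsto f\circ\alpha$ into $\calM(\group,\outspace)$ reduces the problem to joint measurability of the evaluation pairing $(h,g)\mapsto h(g)$ on $\calM(\group,\outspace)\times\group$, with the evaluation $\sigma$-algebra on the first factor --- and this pairing is in general \emph{not} jointly measurable on the full space of measurable functions (this is the classical obstruction due to Aumann: no $\sigma$-algebra on the set of all measurable maps $[0,1]\to[0,1]$ makes evaluation jointly measurable). A countable generator of $\borel{\group}$ does not rescue the argument: the only jointly measurable functions a rectangle-based approximation produces are of the form $\sum_n \operatorname{ev}_{g_n\cdot y}(f)\,\indicatorfunc{g\in B_n}$, and these converge to $f(g\cdot y)$ only if $f$ enjoys some regularity along the orbit, which an arbitrary measurable $f$ lacks; correspondingly, $E^{-1}(C)$ need not lie in any $\sigma$-algebra built from such rectangles, so there is no monotone class to close. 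To be fair, the paper's own proof asserts precisely this joint measurability without justification (separate measurability in $f$ and in $g$ is all its composition argument actually delivers), so you have correctly located the crux; but as written your proposal does not repair it. An honest fix needs an extra structural hypothesis --- e.g.\ a hypothesis class parametrized by a standard Borel parameter space with $(\theta,x)\mapsto f_\theta(x)$ jointly measurable, or hypotheses continuous along orbits --- under which your approximation scheme (or a Carath\'eodory-function argument) does converge.
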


\begin{proof}
    Since the set of measurable functions $\mfunc$ is equipped with the evaluation $\sigma$-algebra $\sigEvaluation$, proving that $\averageOpEquiv$ is measurable is equivalent to prove, that the map $f \mapsto \averageOpEquiv f(x)$ is measurable for all $x \in \inspace$. To this end, we fix $x \in \inspace$ and define 
    \begin{align*}
        \map{F_x}{\mfunc \times \group}{\outspace}, \quad F_x(f,g):= g^{-1} \cdot f(g \cdot x)
    \end{align*}
    where $\mfunc \times \group$ is equipped with the product $\sigma$-algebra $\sigEvaluation \otimes \borel{\group}$, and $\borel{\group}$ denotes the Borel $\sigma$-algebra on the group. $F$ is measurable, because the group $\group$ acts measurably on both the input space $\inspace$ and the output space $\outspace$, the map $f \mapsto f(z)$ is measurably for all $z \in \inspace$ and the composition of measurable functions is measurable.
    Applying Fubini's Theorem yields, that the map
    \begin{align*}
        f \mapsto \int_G F_x(f, g) \, \kappa(x, dg) = \int_\group g^{-1} \cdot f(g \cdot x) \, \kappa(x,dg)
    \end{align*}
    is measurable. Since this holds for any $x \in \inspace$, it implies $f \mapsto \averageOpEquiv f(x)$ is measurable and therefore $\averageOpEquiv$ is measurable.
\end{proof}

\begin{remark}
    While we have shown that the averaging operator is measurable with respect to the evaluation $\sigma$-algebra, measurability also holds for other $\sigma$-algebras and function spaces. In particular, under additional assumptions, the operator is measurable with respect to the Borel $\sigma$-algebra induced by the $L^p$-norm on $L^p(\inspace,\outspace)$, since the averaging operator is a continuous linear map in this setting.
\end{remark}

\section{Appendix: Experiments}
\label{sec:appendix experiments}

\subsection{Toy Example}
The following section provides additional details on the experiments. We begin with the toy example, which serves as a controlled setting in which the symmetry structure and the corresponding averaging operator can be analyzed explicitly. Consider the input space $\inspace = \R^2 \setminus \left\{(x_1,x_2)\in\R^2 : x_1=0 \text{ or } x_2=0 \right\}$ and output space $\outspace=\R$.

For the data distribution, we sample
\begin{align*}
    \Theta \sim \operatorname{Unif}(0,2\pi) \quad \text{and} \quad  R \sim \operatorname{Laplace}(0,5).
\end{align*}
The input random variable is then defined by
\begin{align*}
    \RVX = (R\cos\Theta,R\sin\Theta).
\end{align*}
Since points on the coordinate axes form a set of measure zero, excluding them does not affect the distribution in practice. In the implementation, samples with coordinates sufficiently close to zero are discarded for numerical stability.

The target function is given by
\begin{align*}
    f^*(x_1,x_2, \noise) = 3 \sin\!\left(\frac{x_1}{x_2}\right) + 1 + \noise.
\end{align*}
The observed labels are generated according to
\begin{align*}
    \RVY = f^*(\RVX, \RVnoise), \qquad \RVnoise \sim \mathcal{N}(0,0.1).
\end{align*}
The data distribution is invariant with respect to scaling. Hence, we choose the multiplicative group $\group = \R_+$, that acts on the input space by scalar multiplication,
\begin{align*}
    r \cdot (x_1,x_2) = (rx_1,rx_2), \qquad r\in\R_+
\end{align*}
and trivially on the output space. Therefore, by construction,
\begin{align*}
    f^*(rx_1,rx_2) = f^*(x_1,x_2)
\end{align*}
for all $r\in\R_+$. Hence Assumption~\ref{assumption data distr} is satisfied. Moreover, the action of $\R_+$ on $\inspace$ is free. Therefore, Assumption~\ref{assumption collection for well-defined} also holds. Finally, since the action on the output space is trivial, Assumption~\ref{assumption loss convex and invariant} is satisfied.

The hypothesis class is chosen as
\begin{align*}
    \hypoclass = \left\{ a\sin\!\left(\frac{x_1}{x_2}\right) + b\sin\!\left(\frac{x_2}{x_1}\right) + c\sin(x_1) + d\sin(x_2) + e : a,b,c,d,e\in\R \right\}.
\end{align*}

The first two basis functions are invariant under the $\R_+$-action, whereas the terms $\sin(x_1)$ and $\sin(x_2)$ are not. The averaging operator removes precisely the non-equivariant components.

More explicitly, let $f\in\hypoclass$ be given by
\begin{align*}
    f(x_1,x_2) = a\sin\!\left(\frac{x_1}{x_2}\right) + b\sin\!\left(\frac{x_2}{x_1}\right) + c\sin(x_1) + d\sin(x_2) + e.
\end{align*}
Applying the averaging operator yields
\begin{align*}
\averageOpEquiv(f)(x) =\int_{\R_+} f(rx)\, \kappa(\projection_{\inspaceRepr}(x),dr).
\end{align*}
Since
\begin{align*}
    \frac{(rx)_1}{(rx)_2} = \frac{x_1}{x_2},
\end{align*}
the invariant components remain unchanged. The remaining terms vanish because the averaging measure is symmetric and
\begin{align*}
    \int_{\R} \sin(\alpha r) \exp\!\left(-\frac{|r|}{b}\right) \,dr = 0
\end{align*}
for every $\alpha\in\R$. Hence
\begin{align*}
    \averageOpEquiv(f)(x) = a\sin\!\left(\frac{x_1}{x_2}\right) + b\sin\!\left(\frac{x_2}{x_1}\right) + e.
\end{align*}

Therefore, the equivariant subspace is exactly obtained by setting $c=d=0$.

For each run, a dataset consisting of $2000$ samples is generated independently and split into training and test sets using an $80/20$ train-test split. The experiments are repeated over $10$ independent runs and the reported quantities in Table~\ref{tab:results} correspond to averages across runs.
The McAllester bounds are evaluated using isotropic Gaussian priors and posteriors. The prior distribution is chosen as
\begin{align*}
    P = \mathcal{N}(w_{\mathrm{prior}},\sigma_P^2 I),
\end{align*}
where $w_{\mathrm{prior}} \sim \mathcal{N}(0,I)$ and $\sigma_P^2 = 0.2$. The KL divergence between prior and posterior is computed analytically using the closed-form expression for Gaussian measures. McAllester's PAC-Bayes bound is then evaluated with confidence parameter $\delta = 0.05$.

To estimate the expected empirical and test risks of the posterior distribution, Monte Carlo sampling is employed. More precisely, parameter vectors are sampled from the posterior distribution and the corresponding mean squared errors are averaged over $100$ posterior samples.

The symmetry-aware model is obtained by projecting the learned parameter vector onto the equivariant subspace. Concretely, only the invariant features
\begin{align*}
    \left( \sin\!\left(\frac{x_1}{x_2}\right), \sin\!\left(\frac{x_2}{x_1}\right), 1 \right)
\end{align*}
are retained. The non-equivariant coefficients corresponding to $\sin(x_1)$ and $\sin(x_2)$ are discarded. PAC-Bayes quantities are then recomputed using the projected posterior and prior distributions restricted to this equivariant subspace.

\subsection{MNIST}
In practical learning problems, identifying invariant or equivariant hypotheses is typically substantially more challenging than in the toy example discussed in the previous section. In particular, for high-dimensional data such as images, equivariance cannot usually be enforced analytically at the level of the hypothesis space, but instead has to be incorporated directly into the neural network architecture. In this section, we describe the implementation details for the remaining experiments using the rotated MNIST dataset as a representative example.

As a baseline model for the rotated MNIST dataset, we employ a standard convolutional neural network consisting of two convolutional blocks followed by two fully connected layers. Each convolutional block contains a convolutional layer and a subsequent max-pooling operation. Both convolutional layers use kernels of size $5\times5$, stride $1$, and zero-padding of size $2$. The baseline architecture does not incorporate any form of rotational equivariance. 

For the symmetry-aware model, we use an equivariant convolutional neural network implemented with the e2cnn library~\cite{weiler2019e2cnn}. The e2cnn framework provides implementations of convolutional layers that are equivariant under Euclidean symmetry groups, including rotations and reflections. In our experiments, equivariance is implemented using the regular representation of the cyclic group $C_8 \subset \specialorthgr{2}$, which serves as a discrete approximation of the continuous rotation group $\specialorthgr{2}$. Consequently, the resulting network is exactly equivariant with respect to rotations by integer multiples of $\frac{2\pi}{8}$, while only approximately equivariant with respect to arbitrary continuous rotations.

The equivariant architecture mirrors the baseline architecture. In particular, the number of layers and overall architectural structure are matched between both models. This ensures that the two networks have comparable parameter counts and representational capacity. Therefore, differences in empirical performance and PAC-Bayes quantities can be attributed primarily to the incorporation of symmetry equivariance rather than to increased model complexity.

\section{Appendix: Example for Gaussian KL Decomposition}
\label{sec:example gaussian kl decomposition}

In this section, we present an illustrative example that complements the theoretical discussion in the main paper. Specifically, we provide an explicit computation of the KL divergence between two Gaussian measures defined on a function space, as well as between their corresponding pushforward measures restricted to the subset of equivariant functions. This example serves to concretely demonstrate the decomposition of the KL divergence established in the main text and to clarify how the abstract results apply in a tractable Gaussian setting.

\begin{example}[Gaussian KL decomposition]
    \label{example gaussian KL projection}
    Let the hypothesis space be the linear maps from $\R^2$ to $\R$, $\hypoclass = \calL(\R^2 \to \R) \cong \R^2$. Let $\group = S_2$ denote the group be the symmetric group on two elements, acting on the input space $\R^2$ by permuting its coordinates. Explicitly, for $(x,y) \in \R^2$ and $\sigma \in S_2$, the action is given by
    \begin{align*}
        \sigma \cdot (x,y) = \begin{cases}
            (x,y) \quad &\text{if } \,\sigma = e\,, \\
            (y,x) \quad &\text{else}\,.
        \end{cases}
    \end{align*}
    We equip the output space $\R$ with the trivial (identity) group action. In this setting, the space of equivariant functions is $\calU = \{f \in \calL(\R^2, \R) :  f(x, y) = f(y,x) \,,\  \forall x,y \in \R \}$. Since the output is invariant under the group action, the equivariant functions are, in fact, invariant with respect to the action of $S_2$ on the input space. In the following we identify each linear map $f \in \calL(\R^2, \R)$ with its matrix representation $w \in \R^2$ given the standard basis, $f(x) = w^\top x$. Therefore $\calU$ is a one-dimensional subspace corresponding to the diagonal line in $\R^2$. The average operator $\averageOpEquiv: \calL(\R^2, \R) \to \calU$ is given by the orthogonal projection
    \begin{align*}
        \averageOpEquiv(w) = \left( \frac{w_1 + w_2}{2}, \frac{w_1 + w_2}{2} \right)^\top \,.
    \end{align*}
    Next, let $\mu = \mathcal{N}(0, \idmat_2)$ and $\nu = \mathcal{N}(m, \idmat_2)$ be Gaussian probability distributions on $\R^2$ with $m = (1, 0)^\top$. We first compute the KL divergence directly:
    \begin{align*}
        \KL{\nu}{\mu}
        = \frac{1}{2} \left( \|m\|^2 + \operatorname{tr}(\idmat_2^{-1} \idmat_2) - 2 - \log \det(\idmat_2^{-1} \idmat_2) \right) = \frac{1}{2}\,.
    \end{align*}
    Next, we compute the pushforward distributions  under the linear operator $\averageOpEquiv\averageOpEquiv$. Since $\averageOpEquiv$ is linear, the pushforward measures are given by
    \begin{align*}
        \averageOpEquiv_* \mu = \mathcal{N}(0, \Sigma), \quad \averageOpEquiv_* \nu = \mathcal{N}(\averageOpEquiv(m), \Sigma)\,,
    \end{align*}
    where the covariance matrix $\Sigma$ and the mean $\averageOpEquiv(m)$ are given by
    \begin{align*}
        \Sigma = M_\averageOpEquiv \idmat_2 M_\averageOpEquiv^\top = \frac{1}{2}
        \begin{bmatrix}
        1 & 1 \\
        1 & 1
        \end{bmatrix}\,, \quad \averageOpEquiv(m) = \left( \frac{1}{2}, \frac{1}{2} \right)\,.
    \end{align*}
    Here, $M_\averageOpEquiv = \begin{bmatrix} \frac{1}{2} & \frac{1}{2} \\ \frac{1}{2} & \frac{1}{2} \end{bmatrix}$ denotes the matrix representation of $\averageOpEquiv$ with respect to the standard basis.
    We now restrict to the subspace $\calU$, and compute the KL divergence:
    \begin{align*}
        \KL{\averageOpEquiv_* \nu}{\averageOpEquiv_* \mu} = \frac{1}{2} \left( \left( \frac{1}{\sqrt{2}} \right)^2 \right) = \frac{1}{4}\,.
    \end{align*}
    To compute the conditional KL divergence, observe that the conditional distributions $\kappa_\mu$ and $\kappa_\nu$ are Gaussians in the orthogonal direction $\calU^\perp = \{ (x, -x) \}$ with mean 0 and $1/\sqrt{2}$ respectively, and unit variance. Thus,
    \begin{align*}
        \KL{\kappa_\nu(u, \cdot)}{\kappa_\mu(u, \cdot)} = \frac{1}{2} \left( \left( \frac{1}{\sqrt{2}} \right)^2 \right) = \frac{1}{4}\,,
    \end{align*}
    for all $u \in \calU$. Therefore,
    \begin{align*}
        \int_\calU \KL{\kappa_\nu(u, \cdot)}{\kappa_\mu(u, \cdot)} \, \averageOpEquiv_* \nu(du) = \frac{1}{4}\,.
    \end{align*}
    By Corollary~\ref{lemma KL-div for average op} in the main paper, we confirm:
    \begin{align*}
        \KL{\nu}{\mu} =\; \KL{\averageOpEquiv_* \nu}{\averageOpEquiv_* \mu} + \int_\calU \KL{\kappa_\nu(u, \cdot)}{\kappa_\mu(u, \cdot)} \, \averageOpEquiv_* \nu(du)
        =\; \frac{1}{4} + \frac{1}{4} = \frac{1}{2}\,.
    \end{align*}
    Since in this example the conditional KL divergence is strictly positive, the KL divergence for the projected distributions is strictly smaller than the KL divergence of the initial distributions.
\end{example}

The reduction of the KL divergence established in the previous example has direct implications for the complexity term appearing in PAC-Bayesian generalization bounds. In particular, since the KL term forms the principal component of the bound’s data-independent complexity measure, its decomposition and reduction naturally lead to tighter guarantees. To make this connection explicit, we next illustrate how the reduced KL divergence translates into an improved bound in the setting of McAllester’s PAC-Bayes formulation, which serves as the reference bound throughout the main paper.

\begin{example}\label{example gaussian KL projection cont}
    Consider the same setting as in Example~\ref{example gaussian KL projection}. 
    In this setting, the generalization bound from \eqref{eq: corollary complexity term reduced} becomes strictly tighter in the equivariant case. 
    As shown in the previous Example~\ref{example gaussian KL projection}, the KL divergence halves for the equivariant case. 
    Consequently,
    \begin{align*}
        \sqrt{\frac{\KL{\averageOpEquiv_* \Qposteriori}{\averageOpEquiv_* \Pprior} + \log\frac{1}{\delta} + \log \samplesize + 2}{2\samplesize-1}}
        < \sqrt{\frac{\KL{\Qposteriori}{\Pprior} + \log\frac{1}{\delta} + \log \samplesize + 2}{2\samplesize-1}}\,.
    \end{align*}
    This explicit relation illustrates how enforcing equivariance not only reduces the divergence term but also strictly tightens the overall PAC-Bayes bound.
\end{example}


\end{document}